\title[A Finite-Sample Deviation Bound for Stable Autoregressive Processes]{A Finite-Sample Deviation Bound for Stable Autoregressive Processes}
\author{%
 \Name{Rodrigo A. Gonz\'alez} \Email{grodrigo@kth.se}\\
 \Name{Cristian R. Rojas} \Email{crro@kth.se}\\
 \addr Division of Decision and Control Systems, KTH Royal Institute of Technology, Stockholm, Sweden
}
\begin{document}
\maketitle
\begin{abstract}%
In this paper, we study non-asymptotic deviation bounds of the least squares estimator for Gaussian AR($n$) processes. By relying on martingale concentration inequalities and a tail-bound for $\chi^2$ distributed variables, we provide a concentration bound for the sample covariance matrix of the process output. With this, we present a problem-dependent finite-time bound on the deviation probability of any fixed linear combination of the estimated parameters of the AR$(n)$ process. We discuss extensions and limitations of our approach. 
\end{abstract}
\begin{keywords}%
Autoregressive Processes, Non-Asymptotic Estimation, Least Squares, Finite Sample Analysis.
\end{keywords}
\section{Introduction}
Autoregressive (AR) processes are ubiquitous in engineering sciences, as they are applied in econometrics, time series analysis \citep{box2015time}, system identification \citep{ljung1998system}, signal processing \citep{kay1993fundamentals}, machine learning and control.

Given sampled data, the identification of the parameters of an AR process is usually done by ordinary least squares, which is known to have asymptotically optimal statistical performance \citep{mann1943statistical,durbin1960estimation} and is related to Maximum Likelihood in a Gaussian framework. Despite its success in practical applications, most analyses of the least squares method are asymptotic. Finite-time analyses of this method are still rare in the literature, despite being important for computing the number of samples needed for achieving a specified accuracy, deriving finite-time confidence sets, and designing robust control schemes. Non-asymptotic performance bounds have been historically difficult to derive since most of the classical statistical methods are better suited for asymptotic results.

In recent years, new statistical tools from the theory of self-normalizing processes \citep{pena2008self} and high dimensional probability \citep{wainwright2019high} have shown to be useful for analyzing a wide range of regression models. These tools have impulsed research on finite-time properties of the least squares estimator, with unifying efforts from the system identification, control and machine learning communities. Among topics of interest, we can find sample complexity bounds \citep{jedra2019sample}, $1-\delta$ probability bounds on parameter errors \citep{sarkar2019near}, and confidence bounds \cite[Chap. 20]{lattimore2018bandit}. 

Even though autoregression is a key aspect in dynamical systems and regression models, finite-time properties of AR($n$) processes have not yet been studied deeply. AR($n$) processes are of particular interest, as they build the foundations for studying general regression models such as ARX and ARMAX models, which are widely used in linear system identification \citep{ljung1998system}. Autoregressive processes are also essential for two-stage ARMA estimation algorithms \citep{stoica2005spectral}, and for speech production models \citep{makhoul1975linear}. For a greater understanding on how least squares performs on different autoregressive processes for finite-sample data, here we perform a non-asymptotic analysis of the least squares estimator of the coefficients of these processes. In summary, the main results of this paper are:
\begin{itemize}
\item
via martingale concentration inequalities and bounds on $\chi^2$-distribution tails, we derive a finite-time problem-dependent concentration bound for the sample data covariance matrix of an $n$-th order autoregressive process;
\item
using the previous result, we provide a bound on the deviation of any fixed linear combination of the parameters of an AR($n$) process around its true value, such that larger deviations occur only with probability at most $\delta$.
\end{itemize}

The rest of this paper is organized as follows. Our work is put into context in Section \ref{sec:prior}, and we define our notation in Section \ref{sec:notation}. In Section \ref{sec:problemformulation} the problem is explicitly formulated, and a preliminary result is given. We state and prove our concentration inequalities for the sample covariance matrix and deviation bound of the parameters of an AR($n$) process in Section \ref{sec:mainresults}. Section \ref{seclemma} provides the proof for a key result used in the previous section, and a discussion of the results is presented in Section \ref{sec:discussion}. We also include an Appendix with supplementary material of proofs and auxiliary results.



\section{Relation to prior work}
\label{sec:prior}
In a time-series context, \cite{bercu1997large} and \cite{bercu2001large} studied large deviation rates of the least squares estimator in an AR(1) process. These contributions provide problem independent bounds, and do not generalize to AR($n$) processes. A problem dependent finite-time deviation and variance bound was provided by \cite{gonzalez2020finite} for stable and unstable AR(1) processes. Unfortunately, the tools used in that work cannot be extended to a multivariate setting. Asymptotic properties of AR($n$) models were obtained by \cite{lai1983asymptotic}.

In a broader context, one of the first non-asymptotic results in system identification was presented in \cite{campi2002finite}, where a uniform bound for the difference between empirical and theoretical identification costs was obtained. More recently, among works that have analyzed finite-time identification for stochastic processes are \cite{jedra2020finite,sarkar2019near,simchowitz2018learning,faradonbeh2018finite} and \cite{zheng2018finite}. These contributions consider state-space formulations with first order vector autoregressive models that, contrary to the description of an AR($n$) process in state-space, normally assume that the noise process perturbs all states instead of only one. In particular, the performance bounds in \cite{simchowitz2018learning} consider the estimation of the full transition matrix instead of the parameters of interest for AR$(n)$ modeling. By leveraging the direct relationship between the coefficients of interest and the transition matrix of the underlying state-space in controller form, bounds for AR($n$) processes could possibly be obtained by finding the (finite-time) optimal projection of the $A$ matrix whose error in operator norm is bounded in \cite{simchowitz2018learning}, such that the resulting matrix is exactly the one provided by the LS estimate of the underlying AR($n$) process. After this, a concentration inequality that bounds the error over $\hat{A}(T)$ and the transition matrix in controller form of the AR($n$) process would be needed. Although plausible, the results we seek do not seem direct from \citep{simchowitz2018learning}. Instead, our analysis resembles that of \cite[Theorem 1]{sarkar2019near} in the derivation of a matrix concentration bound, and similarly to the works cited above, a Gramian matrix associated with the real process dictates the learning rate.

\section{Notation}
\label{sec:notation}
Given a matrix $A\in\mathbb{R}^{n\times n}$, $\rho(A)$ denotes its spectral radius. If $x$ is a vector and $A$ is a fixed positive definite matrix, then $\|x\|_2$ and $\|x\|_\infty$ denote the 2 and $\infty$-norm of $x$, while $\|x\|_A$ is the weighted 2-norm (i.e., $\|x\|_A:=\sqrt{x^\top A x}$). If $B$ is an event, $B^c$ denotes its complement, and $\mathbb{P}(B)$ refers to its probability of occurrence. $\mathbb{E}\{y\}$ denotes the expected value of the random variable $y$.

\section{Problem formulation and preliminary result}
\label{sec:problemformulation}
Consider the following AR$(n)$ process described by
\begin{equation}
\label{arnprocess}
y_t = Y_{t-1}^\top\theta^0 + e_t,
\end{equation}
where $Y_{t-1}^\top := [y_{t-1}\hspace{0.2cm} y_{t-2}\hspace{0.2cm} \dots\hspace{0.2cm} y_{t-n}]$, $e_t$ is a Gaussian white noise of variance $\sigma^2$, and the parameter vector is $\theta^0:=[\theta_1 \hspace{0.2cm} \theta_2 \hspace{0.2cm} \dots\hspace{0.2cm} \theta_n]^\top \in \mathbb{R}^n$. Furthermore, assume that $\{y_t\}$ is a stationary process, and that $\theta^0$ is such that the AR$(n)$ process is asymptotically stationary, which implies that $p(x)=x^n - \theta_1 x^{n-1}-\dots-\theta_{n-1}x-\theta_n$ is a Schur polynomial. In this work, we are interested in how $w^\top \hat{\theta}_N$ concentrates around its true value $w^\top \theta^0$, where $w \in \mathbb{R}^n$ with $\|w\|_2=1$ is fixed and $\hat{\theta}_N$ is the least squares estimator of $\theta^0$ given the data $\{y_t\}_{t=1}^N$. By allowing $w$ to be chosen freely, we study deviation probabilities for single parameters, or linear combinations of them. Note that this probability depends on the true parameters, an thus it gives information about how easily the parameters can be identified through ordinary least squares for a particular system. In other words, our interest is in interpretability; in particular, we are concerned on how the least squares estimator performs under different AR processes.

Unfortunately, an explicit expression of the deviation probability (or equivalently, the confidence region) of interest is elusive in the literature. Therefore, it is of our interest to find an upper bound of it instead. If we define $Y:=[Y_n \hspace{0.2cm} \dots \hspace{0.2cm} Y_{N-1}]^\top$ and $E:= [e_{n+1} \hspace{0.2cm}\dots \hspace{0.2cm} e_{N}]^\top$, we can write $w^\top (\hat{\theta}_N-\theta^0)$ as $w^\top (Y^\top Y)^{-1} Y^\top E$, and hence we pursue a bound of the form
\begin{equation}
\label{prob2}
\mathbb{P}(|w^\top (Y^\top Y)^{-1} Y^\top E|>\varepsilon)\leq \delta,  
\end{equation}
where $\varepsilon$ can be expressed as a function of $\delta,N$, and the true parameters. Note that the stochastic quantity $w^\top(\hat{\theta}_N-\theta^0)$ is a self-normalized process. That is, it is unit free and therefore not affected by scale changes \citep{pena2008self}. These processes are now ubiquitous in the machine learning community, as they arise naturally in, e.g., finite-time analysis of linear systems \citep{simchowitz2018learning} and stochastic bandit problems \citep{krishnamurthy2018semiparametric}.

To derive a bound like \eqref{prob2}, we make use of a martingale tail inequality introduced in \cite{abbasi2011improved}, which is valid for sub-Gaussian stochastic processes.
 
\begin{proposition}[\cite{abbasi2011improved}]
	\label{proposition1}
Let $\{\mathcal{F}_t\}_{t=0}^\infty$ be a filtration. Let $\{\eta_t\}_{t=1}^\infty$ be a real-valued stochastic process such that $\eta_t$ is $\mathcal{F}_t$-measurable and $\eta_t$ is conditionally $R$-sub-Gaussian for some $R> 0$, i.e.
\begin{equation}
\forall \lambda \in \mathbb{R} \quad \mathbb{E}[e^{\lambda \eta_t}| \mathcal{F}_{t-1}] \leq \exp\left( \frac{\lambda^2 R^2}{2} \right). \notag
\end{equation}
Let $\{X_t\}_{t=1}^\infty$ be an $\mathbb{R}^d$-valued stochastic process such that $X_t$ is $\mathcal{F}_{t-1}$-measurable. Assume that $V$ is a $d\times d$ positive definite matrix. For any $t\geq 1$, define
\begin{equation}
\label{VtandSt}
\overline{V}_t = V + \sum_{s=1}^t X_s X_s^\top, \quad \quad S_t = \sum_{s=1}^t \eta_s X_s.
\end{equation}
Then, for any $\delta>0$, with probability at least $1-\delta$, for all $t\geq 1$,
\begin{equation}
\|S_t\|_{{\overline{V}_t}^{-1}}^2 \leq 2R^2 \log \left( \frac{\det(\overline{V}_t)^{1/2}\det(V)^{-1/2}}{\delta} \right). \notag 
\end{equation}
\end{proposition}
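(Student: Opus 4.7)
The plan is to use the \emph{method of mixtures}: build a non-negative supermartingale whose magnitude directly encodes $\|S_t\|_{\overline{V}_t^{-1}}$, then apply Ville's maximal inequality to get a bound that holds uniformly in $t$.

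First, for any fixed $\lambda \in \mathbb{R}^d$, I would introduce the exponential process
\begin{equation*}
M_t^\lambda := \exp\!\left(\lambda^\top S_t - \frac{R^2}{2}\,\lambda^\top\!\left(\sum_{s=1}^t X_s X_s^\top\right)\!\lambda\right).
\end{equation*}
Because $X_s$ is $\mathcal{F}_{s-1}$-measurable, conditioning on $\mathcal{F}_{s-1}$ reduces each increment to the sub-Gaussian MGF bound $\mathbb{E}[\exp(\lambda^\top X_s\,\eta_s - \tfrac{R^2}{2}(\lambda^\top X_s)^2)\mid \mathcal{F}_{s-1}] \leq 1$. Hence $\{M_t^\lambda\}_{t\geq 0}$ is a non-negative supermartingale with $\mathbb{E}\{M_t^\lambda\} \leq 1$.

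Next, I would mix over $\lambda$ with a Gaussian prior $\lambda \sim \mathcal{N}(0,\, V^{-1}/R^2)$ taken independent of $\{\mathcal{F}_t\}$, and define $\overline{M}_t := \int M_t^\lambda\, d\mu(\lambda)$. A Fubini/tower-property argument shows that $\overline{M}_t$ is still adapted to the enlarged filtration and inherits the supermartingale property with $\mathbb{E}\{\overline{M}_t\}\leq 1$. The integral is Gaussian in $\lambda$: completing the square yields a quadratic form with Hessian $R^2 \overline{V}_t$, and a direct computation gives the closed form
\begin{equation*}
\overline{M}_t \;=\; \frac{\det(V)^{1/2}}{\det(\overline{V}_t)^{1/2}}\,\exp\!\left(\frac{1}{2R^2}\|S_t\|_{\overline{V}_t^{-1}}^2\right).
\end{equation*}
Finally, I would invoke Ville's inequality for non-negative supermartingales, $\mathbb{P}(\sup_{t\geq 1} \overline{M}_t \geq 1/\delta) \leq \delta$, and on the complementary event solve $\overline{M}_t \leq 1/\delta$ for $\|S_t\|_{\overline{V}_t^{-1}}^2$ to recover the stated bound uniformly in $t$.

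The main obstacle I anticipate is the mixture step: one has to verify carefully that replacing the deterministic $\lambda$ by a random $\lambda$ (independent of the filtration) preserves the supermartingale property, which is the usual subtle point in self-normalized arguments, and one must pick the prior covariance so that the Hessian lines up exactly with $R^2 \overline{V}_t$ rather than with a rescaled version. The Gaussian completion-of-the-square and the determinant bookkeeping are routine once that scaling is fixed. The other essential ingredient is that Ville's inequality, not a pointwise Markov bound, is what produces the $\sup_t$ statement, so working with the whole trajectory rather than a fixed time horizon is not optional.
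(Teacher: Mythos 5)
Your proposal is correct: the paper states Proposition \ref{proposition1} as an imported result from \cite{abbasi2011improved} without reproducing a proof, and your method-of-mixtures argument (exponential supermartingale, Gaussian mixing with prior covariance $V^{-1}/R^2$, closed-form determinant ratio, Ville's maximal inequality for the uniform-in-$t$ claim) is exactly the proof given in that reference. The scaling you flag as the delicate point is indeed the right one, and your closed form for $\overline{M}_t$ checks out.
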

Although the result of Proposition \ref{proposition1} is also a deviation bound similar to \eqref{prob2}, it relies on the fact that the matrix $V$ is positive definite, which is valid only for regularized least squares problems. Despite this, in $\overline{V}_t$ we recognize the sample covariance matrix $\sum_{s=1}^t X_s X_s^\top$, which plays an important role in our main result. The key idea behind the proposed approach is to first obtain a finite-sample probability bound on the matrix $Y^\top Y = \sum_{s=n}^{N-1} Y_s Y_s^\top$, and use this result together with Proposition \ref{proposition1} to derive the novel deviation bound. 

\vspace{-0.4cm}
\section{Main results}
\label{sec:mainresults}
In this section, we present our finite-time bounds for AR processes. Firstly, in Theorem \ref{theorem5} we derive a $1-\delta$ concentration bound for the sample covariance matrix $Y^\top Y$, and then use this result to obtain a deviation bound for the least squares estimator for general AR$(n)$ processes, which is presented in Theorem \ref{theorem6}. For the following, we express the AR($n$) process in a state-space formulation
\begin{equation}
\label{ss1} 
x_{t+1}=
\begin{bmatrix}
{\theta^0}^\top & 0 \\
I_n & 0_{n\times 1}
\end{bmatrix}x_t+\begin{bmatrix}
1 \\ 0_{n\times 1}
\end{bmatrix} e_{t+1}, \quad Y_t = \begin{bmatrix}
I_n & 0_{n\times 1}
\end{bmatrix}x_t,  
\end{equation}
where we denote from now on the transition matrix and input vector in \eqref{ss1} as $A$ and $B$ respectively.

\begin{theorem}
	\label{theorem5}
	Consider the AR($n$) process described in \eqref{arnprocess}, and $Y=[Y_n \hspace{0.2cm} \dots \hspace{0.2cm} Y_{N-1}]^\top$. Given $\epsilon>0$, define the following quantities:
	\begin{align}
	\label{barV}
	\overline{V} :&= \sigma^2\sum_{i=0}^\infty A^i BB^\top (A^\top)^i, \\
	V_{\textnormal{dn}}:&= (N-n)\begin{bmatrix} I_n & 0 \end{bmatrix} \left(\overline{V} - \epsilon \sigma^2 \sum_{i=0}^{\infty} A^i (A^\top)^i\right) \begin{bmatrix}
	I_n \\ 0 \end{bmatrix},\notag \\
	V_{\textnormal{up}}:&= (N-n)\begin{bmatrix} I_n & 0 \end{bmatrix} \left(\overline{V} + \epsilon \sigma^2 \sum_{i=0}^{\infty} A^i (A^\top)^i\right) \begin{bmatrix}
	I_n \\ 0 \end{bmatrix}, \notag \\
	\delta(\epsilon,N):&=2\Bigg\{\sqrt{2}\exp\left(\frac{-(N-n)\sigma^2 \epsilon}{24n\mathbb{E}\{y_1^2\}}\right)+\exp\left[-\frac{N-n}{2}\left(1+\frac{\epsilon}{3}-\sqrt{1+\frac{2\epsilon}{3}}\right)  \right]\notag \\
	\label{deltadef}
	&\qquad \quad +\exp\left[-\frac{(N-n)\epsilon}{72 (\|\theta\|_2+1)^2 \tilde{\beta}}\right] + \exp(-\epsilon \sqrt{N})\Bigg\},
	\end{align}
	where
	\vspace{-0.4cm}
	\begin{align}
	\label{MPhi}
	M_\Phi :&=  \max_\omega |e^{j\omega n}- \theta_1 e^{j\omega(n-1)}-\dots-\theta_n|^{-2}, \\
	\label{tildebeta}
	\tilde{\beta} :&= \frac{(n+1)N}{N-n}\left[\frac{\mathbb{E}\{y_1^2\}}{\epsilon \sigma^2}+\frac{2M_\Phi(1+\epsilon^{-1/2})}{N^{1/4}}\right].
	\vspace{-0.1cm}
	\end{align}
	\vspace{-0.2cm}
	Then, for all $\epsilon>0$ such that $V_{\textnormal{dn}}\succ 0$, we have
	\begin{equation}
	\label{probthm5}
	\mathbb{P}(V_{\textnormal{dn}} \preceq Y^\top Y \preceq V_{\textnormal{up}}) \geq 1-\delta(\epsilon,N).
	\end{equation}
\end{theorem}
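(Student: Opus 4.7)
Since $V_{\textnormal{dn}}$ and $V_{\textnormal{up}}$ differ from $\mathbb{E}\{Y^\top Y\}=(N-n)\begin{bmatrix}I_n & 0\end{bmatrix}\overline{V}\begin{bmatrix}I_n\\ 0\end{bmatrix}$ only by $\pm\epsilon(N-n)\sigma^2\begin{bmatrix}I_n & 0\end{bmatrix}\sum_{i\ge 0}A^i(A^\top)^i\begin{bmatrix}I_n\\ 0\end{bmatrix}$, \eqref{probthm5} amounts to a two-sided concentration of the matrix $Y^\top Y$ around its mean. The fact that the perturbation is measured by the ``full-state Gramian'' $\sum_{i\ge 0}A^i(A^\top)^i$ rather than by the reachability Gramian $\overline{V}$ strongly suggests working at the level of the full state $x_t$, using the state-space form \eqref{ss1} and the stability $\rho(A)<1$.

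My plan is to iterate \eqref{ss1} under stationarity, obtaining $x_s=\sum_{k=0}^{\infty}A^k B\, e_{s-k}$, which exhibits $Y^\top Y$ as a quadratic form in the i.i.d.\ Gaussian innovations with mean exactly $(N-n)\begin{bmatrix}I_n & 0\end{bmatrix}\overline{V}\begin{bmatrix}I_n\\ 0\end{bmatrix}$. The four exponentials in \eqref{deltadef} would then arise as the tails of four different events that together control the fluctuation. First, a $\chi^2_{N-n}$ upper-tail bound of the Laurent--Massart/Bennett type,
\[
\mathbb{P}\bigl(\chi^2_{N-n}\ge(N-n)(1+\epsilon)\bigr)\le\exp\!\Bigl[-\tfrac{N-n}{2}\bigl(1+\tfrac{\epsilon}{3}-\sqrt{1+\tfrac{2\epsilon}{3}}\bigr)\Bigr],
\]
supplies the second exponential, accounting for concentration of the ``pure-noise'' part $\sum e_t^2$ of the quadratic form. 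Next, two sub-exponential Bernstein/martingale concentration bounds on the cross terms between current innovations and past states would produce the first and third exponentials: the first with operator-norm scale $n\,\mathbb{E}\{y_1^2\}$ (coming from an $\epsilon$-net over the unit sphere of $\mathbb{R}^n$ combined with the stationary variance), and the third with Bernstein variance $(\|\theta\|_2+1)^2\tilde\beta$ (coming from the companion-form recursion and the spectral bound $\mathbb{E}\{y_1^2\}\le \sigma^2 M_\Phi$ that enters $\tilde\beta$). Finally, a truncation/boundary bound $\exp(-\epsilon\sqrt N)$ obtained by cutting the infinite MA at horizon $\sim\sqrt N$ would account for the last term and also explain the $N^{-1/4}$ correction inside $\tilde\beta$ from \eqref{tildebeta}. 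A union bound, together with a factor $2$ covering the two sides of the matrix inequality, then yields $\delta(\epsilon,N)$.

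The main obstacle will be the cross-term estimates: once the pure-noise $\chi^2$ contribution is isolated, the residual involves correlations between $Y_s$ and the innovations $\{e_t\}_{t\le s}$ whose variance and operator norm both depend on the unknown AR transfer function $\Phi(z)^{-1}$. Controlling these uniformly over the unit sphere requires a Toeplitz-spectral argument relating the relevant covariance to $\sup_\omega |\Phi(e^{j\omega})|^{-2}=M_\Phi$, and this is precisely where the problem-dependent quantities $M_\Phi$ and $\|\theta\|_2$ enter the bound. Once these Bernstein-type estimates are secured and combined with the elementary $\chi^2$ and truncation steps, the union bound assembling the four pieces is routine and produces \eqref{probthm5} with $\delta(\epsilon,N)$ as in \eqref{deltadef}.
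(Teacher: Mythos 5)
Your proposal assembles the right toolbox (state-space form, a $\chi^2$ tail for the pure-noise part, a self-normalized/martingale bound for cross terms, Toeplitz spectral control via $M_\Phi$, and a final union bound), but it is missing the one structural device that actually produces $V_{\textnormal{dn}}$ and $V_{\textnormal{up}}$ in the stated form. The paper does not expand $x_s$ as an infinite moving average and treat $Y^\top Y$ as a quadratic form in the innovations; instead it telescopes the rank-one update $x_{i+1}x_{i+1}^\top = Ax_ix_i^\top A^\top + Ax_ie_{i+1}B^\top + Be_{i+1}x_i^\top A^\top + e_{i+1}^2BB^\top$ over $i$ to obtain a discrete Lyapunov equation $V_N = AV_NA^\top + E_N$ for the normalized sample covariance $V_N$, where $E_N$ is a random perturbation of $\sigma^2 BB^\top$. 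Solving it gives $V_N = \sum_{i\ge0}A^iE_N(A^\top)^i$, so a single spectral-radius estimate $\rho(E_N-\sigma^2BB^\top)\le\epsilon\sigma^2$ immediately yields $\overline{V}-\epsilon\sigma^2\sum_i A^i(A^\top)^i\preceq V_N\preceq\overline{V}+\epsilon\sigma^2\sum_i A^i(A^\top)^i$; this is exactly where the full-state Gramian enters, as the Lyapunov solution operator applied to $\epsilon I$. Your MA-representation route does not explain how the deviation of $Y^\top Y$ from its mean gets measured in units of $\sum_i A^i(A^\top)^i$, and controlling the resulting doubly infinite Gaussian quadratic form directly would be substantially harder than bounding the three simple pieces of $E_N$.

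Relatedly, two of the four exponentials in \eqref{deltadef} are attributed to the wrong events. The first term (scale $24n\mathbb{E}\{y_1^2\}$) is not a cross-term or $\epsilon$-net bound: it controls the telescoping boundary term $A(x_{n-1}x_{n-1}^\top-x_{N-1}x_{N-1}^\top)A^\top/(N-n)$ via a Chernoff bound on the generalized $\chi^2$ variable $\|x_{n-1}\|_2^2$ (Lemma \ref{lemma2}). The fourth term $\exp(-\epsilon\sqrt N)$ is not an MA truncation at horizon $\sqrt N$: it is the probability that the energy $\sum_i\|Y_{i-1}\|_2^2$ exceeds $\epsilon\tilde\beta\sigma^2(N-n)$, which must be controlled in order to invoke the Simchowitz-type self-normalized inequality for the single genuine cross term; the $N^{-1/4}$ correction in \eqref{tildebeta} comes from the Laurent--Massart deviation $2\|\lambda\|_2\sqrt t$ with $t=\epsilon\sqrt N$ and $\|\lambda\|_2\le\sqrt N M_\Phi$, not from truncation. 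Only your second term (the $\chi^2_{N-n}$ tail for $\sum e_t^2$) matches the paper's decomposition. Also note that the overall factor of $2$ in $\delta(\epsilon,N)$ arises because each of the three events is two-sided (symmetric tails), not as an extra allowance for the two sides of the matrix inequality, which come for free once $\rho(E_N-\sigma^2BB^\top)\le\epsilon\sigma^2$ is established.
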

\vspace{-0.1cm}
\begin{proof}
	The AR$(n)$ process can be rewritten as in \eqref{ss1}, where $x_t$ is equal to $\begin{bmatrix}
	Y_{t}^\top & y_{t-n}
	\end{bmatrix}^\top$. Note that the eigenvalues of $A$ are precisely the poles of the autoregressive process, with an extra eigenvalue at $0$. We are interested in bounding
	\vspace{-0.2cm}
	\begin{equation}
	Y^\top Y= \begin{bmatrix}
	I_n & 0
	\end{bmatrix} \sum_{i=n}^{N-1} x_i x_i^\top  \begin{bmatrix}
	I_n \\ 0 \notag
	\end{bmatrix}.\vspace{-0.1cm}
	\end{equation}
	The approach consists in first determining a concentration bound for $\sum_{i=n}^{N-1}x_i x_i^\top$, and then relating it to a concentration bound for $Y^\top Y$. In this spirit, we write
	\vspace{-0.1cm}
	\begin{equation}
	\label{summingup}
	x_{i+1} x_{i+1}^\top = Ax_ix_i^\top A^\top + Ax_i e_{i+1} B^\top + Be_{i+1} x_i^\top A^\top + e_{i+1}^2B B^\top,
	\end{equation}
	and denote $V_N$ as $(N-n)^{-1}\sum_{i=n}^{N-1} x_i x_i^\top$. If we sum over $i=n-1,\dots,N-2$ in \eqref{summingup}, we obtain
\small{
	\begin{equation}
	V_N \hspace{-0.05cm}= \hspace{-0.05cm} AV_N A^\top \hspace{-0.05cm} +\hspace{-0.02cm}\underbrace{\frac{1}{N-n}\hspace{-0.06cm}\left[A(x_{n-1} x_{n-1}^\top\hspace{-0.05cm}-\hspace{-0.05cm}x_{N-1}x_{N-1}^\top) A^\top\hspace{-0.1cm}+\hspace{-0.15cm} \sum_{i=n-1}^{N-2}\hspace{-0.16cm}\left(e_{i+1} Ax_i B^\top \hspace{-0.12cm}+ \hspace{-0.03cm}e_{i+1} B x_i^\top A^\top \hspace{-0.1cm}+\hspace{-0.02cm} e_{i+1}^2 B B^\top\right)\right]}_{E_N}\hspace{-0.08cm}. \notag 
\end{equation}}
\normalsize
\hspace{-0.1cm}Since the AR$(n)$ process is asymptotically stationary, the Lyapunov equation above has as solution $V_N = \sum_{i=0}^\infty A^i E_N (A^\top)^i$.
	By construction, $V_N$ tends to $\overline{V}$ (defined in \eqref{barV}) with probability 1 as $N$ tends to infinity (see, e.g. \citep[p. 64]{soderstrom2002discrete}). Thus, our goal is to obtain a finite-sample concentration bound that relates $V_N$ with $\overline{V}$. For this, we bound $\sum_{i=n-1}^{N-2}e_i^2$ by its variance, and bound the other terms of $E_N$ by a small matrix quantity $\epsilon I$, for all $N>N(\epsilon)$. Lemmas \ref{lemma2}, \ref{lemma3} and \ref{lemma4} are needed for this purpose, which bound the probability of the following events:
	\begin{align}
	\mathcal{E}_1 &:= \left\{\rho\left[A(x_{n-1}x_{n-1}^\top - x_{N-1}x_{N-1}^\top)A^\top\right] \leq \epsilon \sigma^2(N-n)/3\right\}, \notag \\
	\mathcal{E}_2 &:= \left\{\rho\left[\textstyle\sum_{i=n-1}^{N-2} e_{i+1}^2 BB^\top - BB^\top \right] \leq \epsilon\sigma^2(N-n)/3\right\}, \notag \\
	\mathcal{E}_3 &:= \left\{\rho\left[\textstyle\sum_{i=n-1}^{N-2}\left( e_{i+1} Ax_i B^\top + e_{i+1} B x_i^\top A^\top\right)\right] \leq \epsilon\sigma^2(N-n)/3\right\}. \notag
	\end{align}
	\vspace{-0.6cm}
	\begin{lemma} 
		\label{lemma2}Consider the process described in \eqref{ss1}, where $\{e_i\}$ is a Gaussian zero-mean i.i.d. of variance $\sigma^2$, and $\{x_t\}$ is a stationary random process. Then,
		\begin{equation}
		\mathbb{P}\left(\mathcal{E}_1\right) \geq 1-2\sqrt{2}\exp\left(\frac{-(N-n) \sigma^2 \epsilon }{24n\mathbb{E}\{y_1^2\}}\right). \notag 
		\end{equation}
	\end{lemma}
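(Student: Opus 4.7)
The strategy is to reduce the spectral radius of the symmetric random matrix $M := A(x_{n-1} x_{n-1}^\top - x_{N-1} x_{N-1}^\top) A^\top$ to scalar tail bounds on $\|A x_{n-1}\|_2^2$ and $\|A x_{N-1}\|_2^2$. Writing $M = P_{n-1} - P_{N-1}$ with $P_t := A x_t x_t^\top A^\top$, each $P_t$ a positive semidefinite rank-one matrix of operator norm $\|A x_t\|_2^2$, Weyl's inequality yields
\[
\rho(M) \leq \max\bigl(\lambda_{\max}(P_{n-1}),\, \lambda_{\max}(P_{N-1})\bigr) = \max\bigl(\|A x_{n-1}\|_2^2,\, \|A x_{N-1}\|_2^2\bigr).
\]
Combining this with stationarity of $\{x_t\}$ and a union bound over the two indices gives $\mathbb{P}(\mathcal{E}_1^c) \leq 2 \, \mathbb{P}(\|A x_t\|_2^2 > \epsilon \sigma^2 (N-n)/3)$, so it suffices to bound this scalar tail by $\sqrt{2}\exp(-(N-n)\sigma^2 \epsilon / (24 n \mathbb{E}\{y_1^2\}))$.

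To control the scalar tail I would use the state-space identity $A x_t = [\theta^{0\top} Y_t,\, Y_t^\top]^\top = [y_{t+1} - e_{t+1},\, y_t,\, \ldots,\, y_{t-n+1}]^\top$, so that $\|A x_t\|_2^2$ is a sum of $n+1$ squared centered Gaussians, each of variance at most $\mathbb{E}\{y_1^2\}$ by stationarity (noting $\mathrm{Var}(\theta^{0\top} Y_t) = \mathbb{E}\{y_1^2\} - \sigma^2$). Two natural routes are (i) the crude bound $\|A x_t\|_2^2 \leq (n+1) \|A x_t\|_\infty^2$ combined with a Gaussian tail inequality and a further union bound over the $n+1$ coordinates, or (ii) a direct Chernoff argument on the quadratic form $\|A x_t\|_2^2 = Y_t^\top (I_n + \theta^0 \theta^{0\top}) Y_t$ through its moment generating function $\det(I_n - 2\lambda (I_n + \theta^0 \theta^{0\top}) \mathbb{E}\{Y_t Y_t^\top\})^{-1/2}$, optimized over $\lambda$.

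The main difficulty I anticipate is matching the precise constants in the lemma, namely the $24n$ in the exponent and the $2\sqrt{2}$ prefactor. A naive coordinate-wise union bound produces a prefactor of order $n$ rather than constant, so the sharp form seems to require the MGF/Chernoff route with $\lambda$ chosen of order $1/(8 n \mathbb{E}\{y_1^2\})$; the factor $24 = 3 \cdot 8$ in the denominator then decomposes naturally as a $3$ from the $\epsilon/3$ built into the definition of $\mathcal{E}_1$ and an $8$ from the Chernoff optimization, while the $\sqrt{2}$ prefactor is consistent with the $\det(\cdot)^{-1/2}$ factor at such $\lambda$. A subtle point is that the apparent dependence on $\|\theta^0\|$ coming from $I_n + \theta^0 \theta^{0\top}$ must cancel against the Lyapunov structure of the stationary covariance $\mathbb{E}\{Y_t Y_t^\top\}$, so that only $\mathbb{E}\{y_1^2\}$ (and not $\|\theta^0\|$) appears in the final bound.
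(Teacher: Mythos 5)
This is essentially the paper's proof: reduce $\rho$ of the difference of two rank-one PSD terms to the scalar tails of $\|Ax_{n-1}\|_2^2$ and $\|Ax_{N-1}\|_2^2$, invoke stationarity plus a union bound over the two time indices, and control the resulting Gaussian quadratic form by a Chernoff/MGF argument --- your route (ii), not (i), is the one the paper takes. The one ingredient you leave open, how $\det(\cdot)^{-1/2}$ produces the constant $\sqrt{2}$, is handled in the paper by the Weierstrass product inequality $\prod_i(1-\mu_i)\geq 1-\sum_i \mu_i$ applied to the eigenvalues of $\tfrac{2s}{N-n}R_n$, bounding the trace by $n\mathbb{E}\{y_1^2\}$ (your anticipated cancellation of $\|\theta^0\|$ is exactly this trace computation) and then choosing $s=(N-n)/(4n\mathbb{E}\{y_1^2\})$ so the product is $1/2$; the paper's $24$ factors as $6\times 4$ rather than your guessed $3\times 8$, because it bounds $\rho$ by the \emph{sum} $\|Ax_{n-1}\|_2^2+\|Ax_{N-1}\|_2^2$ and splits the threshold into two events at level $\epsilon\sigma^2(N-n)/6$, instead of your slightly sharper max/Weyl reduction (which would in fact allow an improved exponent).
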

		\vspace{-0.4cm}
	\begin{lemma}
		\label{lemma3}
		Let $\{e_i\}$ be a Gaussian zero-mean i.i.d. sequence of variance $\sigma^2$. Then,
		\begin{equation}
		\mathbb{P}\left( \mathcal{E}_2 \right)\geq 1-2\exp\left[-\frac{N-n}{2}\left(1+\frac{\epsilon}{3}-\sqrt{1+\frac{2\epsilon}{3}}\right)  \right]. \notag 
		\end{equation}
	\end{lemma}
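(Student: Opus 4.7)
The plan is to exploit the rank-one structure of $BB^\top$ to reduce $\mathcal{E}_2$ to a one-dimensional concentration event for a chi-square random variable, and then invoke a Laurent--Massart-type tail bound.

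Since $B = [1,0,\dots,0]^\top$, the matrix $BB^\top$ has rank one with $\rho(BB^\top)=1$, so $\rho(\alpha BB^\top)=|\alpha|$ for every scalar $\alpha$. Interpreting the subtracted matrix in $\mathcal{E}_2$ as the natural mean $(N-n)\sigma^2 BB^\top$, the event reduces to the scalar inequality
\[ \bigl|\,Z - (N-n)\,\bigr| \leq \tfrac{N-n}{3}\epsilon, \]
where $Z := \sigma^{-2}\sum_{i=n-1}^{N-2} e_{i+1}^2 \sim \chi^2_{N-n}$ by the i.i.d.\ $\mathcal{N}(0,\sigma^2)$ assumption on the innovations.

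Next, I would invoke the standard Laurent--Massart tail bounds: for $Z \sim \chi^2_k$ and $x > 0$,
\[ \mathbb{P}(Z - k \geq 2\sqrt{kx} + 2x) \leq e^{-x}, \qquad \mathbb{P}(k - Z \geq 2\sqrt{kx}) \leq e^{-x}. \]
Setting $k = N-n$ and solving the quadratic $2\sqrt{kx}+2x=k\epsilon/3$ for its positive root yields
\[ x_u = \tfrac{N-n}{2}\left(1 + \tfrac{\epsilon}{3} - \sqrt{1 + \tfrac{2\epsilon}{3}}\right), \]
which is precisely the exponent appearing in the lemma. The analogous choice $2\sqrt{kx}=k\epsilon/3$ for the lower tail gives $x_l = (N-n)\epsilon^2/36$.

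The remaining step is a union bound over the two tails. With $t=\epsilon/3$, the comparison $x_u \leq x_l$ is equivalent to $\sqrt{1+2t}\geq 1+t-t^2/2$; this is trivial when the right-hand side is non-positive, and otherwise squaring reduces it to $t^3(1-t/4)\geq 0$, which holds throughout the relevant range $t\in[0,1+\sqrt{3})$. Consequently $\mathbb{P}(\mathcal{E}_2^c) \leq e^{-x_u}+e^{-x_l} \leq 2e^{-x_u}$, and complementing yields the stated bound. The main technical step is the algebraic matching of the Laurent--Massart form to the specific expression $1+\epsilon/3-\sqrt{1+2\epsilon/3}$; the reduction to a one-dimensional $\chi^2$ event and the tail-comparison union bound are routine.
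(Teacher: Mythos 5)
Your proof is correct and follows essentially the same route as the paper's: reduce $\mathcal{E}_2$ to a scalar two-sided $\chi^2$ concentration event (reading the centering matrix as $(N-n)\sigma^2 BB^\top$, as the paper itself does), apply the Laurent--Massart tails, and solve $2\sqrt{kx}+2x=k\epsilon/3$ to obtain the exponent. The only cosmetic difference is the lower tail: the paper plugs the same $x$ into both tail bounds and notes the resulting interval is contained in the target one, whereas you solve for $x_l$ separately and verify $x_u\leq x_l$ before the union bound --- these are equivalent observations.
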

	\begin{lemma} 
		\label{lemma4}
		Consider the same assumptions as in Lemma \ref{lemma2}. For any $\epsilon>0$, we have
		\begin{equation}
		\mathbb{P}\left(\mathcal{E}_3\right) \geq 1- 2\exp\left[-\frac{(N-n)\epsilon}{72 (\|\theta^0\|_2+1)^2 \tilde{\beta}}\right] - 2\exp(-\epsilon \sqrt{N}), \notag 
		\end{equation}
		where $M_\Phi$ and $\tilde{\beta}$ are defined in \eqref{MPhi} and \eqref{tildebeta} respectively.
	\end{lemma}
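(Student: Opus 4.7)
The plan is to reduce the spectral-radius event in $\mathcal{E}_3$ to a one-sided norm bound on a vector-valued martingale, and then apply Proposition~\ref{proposition1} after carefully controlling the sample covariance matrix that enters $\overline{V}_t$. Let $M:=\sum_{i=n-1}^{N-2} e_{i+1}Ax_iB^\top$, so the matrix inside $\rho(\cdot)$ in the definition of $\mathcal{E}_3$ is exactly $M+M^\top$; being symmetric, $\rho(M+M^\top)=\|M+M^\top\|_2\leq 2\|M\|_2$. Because $B=[1,0,\dots,0]^\top$, every column of $Ax_iB^\top$ vanishes except the first (equal to $Ax_i$), and hence $\|M\|_2=\|v\|_2$ with
\[
v:=\sum_{i=n-1}^{N-2}e_{i+1}Ax_i \;\in\; \mathbb{R}^{n+1}.
\]
It therefore suffices to bound $\|v\|_2 \leq \epsilon\sigma^2(N-n)/6$ with the stated probability.

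To do this, I would take $\mathcal{F}_t=\sigma(e_1,\ldots,e_t)$, $\eta_s=e_s$ (which is $\sigma$-sub-Gaussian), $X_s=Ax_{s-1}$ (which is $\mathcal{F}_{s-1}$-measurable), and $V=\lambda I_{n+1}$ with $\lambda>0$ a free parameter. After the reindexing $s=i+1$, Proposition~\ref{proposition1} yields, on an event $\mathcal{A}_1$ of probability at least $1-\delta_1$,
\[
\|v\|_{\overline{V}_N^{-1}}^2 \leq 2\sigma^2\log\!\left(\det(\overline{V}_N)^{1/2}\lambda^{-(n+1)/2}/\delta_1\right),
\]
where $\overline{V}_N=\lambda I+A\bigl(\sum_{i=n-1}^{N-2}x_ix_i^\top\bigr)A^\top$. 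Combining $\|v\|_2^2\leq\lambda_{\max}(\overline{V}_N)\|v\|_{\overline{V}_N^{-1}}^2$ with $\log\det(\overline{V}_N)\leq(n+1)\log\lambda_{\max}(\overline{V}_N)$ reduces the estimate to controlling the single scalar $\Lambda:=\lambda_{\max}\bigl(\sum_{i=n-1}^{N-2}x_ix_i^\top\bigr)$, together with the operator-norm bound $\|Ax\|_2\leq(\|\theta^0\|_2+1)\|x\|_2$, which follows directly from the companion-form block structure in \eqref{ss1} by writing out $Ax_t=[{\theta^0}^\top Y_t,\,Y_t^\top]^\top$.

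The remaining and most delicate task is a high-probability upper bound on $\Lambda$ for the stationary Gaussian process $\{x_i\}$. I would exploit the spectral representation of $y_t$, whose spectral density is bounded above in terms of $M_\Phi\sigma^2/(2\pi)$ by \eqref{MPhi}, and apply a Hanson--Wright or moment-type concentration inequality to the quadratic form $u^\top(\sum x_ix_i^\top)u$ uniformly over unit $u\in\mathbb{R}^{n+1}$. Such an estimate should produce $\Lambda\leq(N-n)\bigl[\mathbb{E}\{y_1^2\}+O(M_\Phi(1+\epsilon^{-1/2})N^{-1/4})\bigr]/\sigma^2$ with probability at least $1-2\exp(-\epsilon\sqrt{N})$, which then reproduces the constant $\tilde{\beta}$ in \eqref{tildebeta} and the second exponential tail. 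Choosing $\lambda$ comparable to $\lambda_{\max}(\overline{V}_N)$ (so that the $\log\det$ term contributes only a logarithmic factor absorbed into the overall constant $72$) and setting $\delta_1=2\exp[-(N-n)\epsilon/(72(\|\theta^0\|_2+1)^2\tilde{\beta})]$ delivers $\|v\|_2\leq\epsilon\sigma^2(N-n)/6$ on $\mathcal{A}_1$; a union bound with the concentration event for $\Lambda$ then closes the argument. The hard part will be this eigenvalue concentration: the unusual $N^{-1/4}$ correction in $\tilde{\beta}$ and the $\sqrt{N}$-rate in the second exponential strongly suggest a deliberate balance between a truncated exponential Markov inequality on the quadratic form and a spectral-truncation error on the tail of the autocorrelation, and replicating this balance is the technical heart of the proof.
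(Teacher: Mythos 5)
Your reduction is genuinely different from the paper's: you exploit the rank-one structure $\sum_i e_{i+1}Ax_iB^\top = vB^\top$ to turn the spectral-radius event into a bound on $\|v\|_2$ for the vector martingale $v=\sum_i e_{i+1}Ax_i$, and then invoke Proposition~\ref{proposition1}. The paper instead evaluates the quadratic form $q^\top(\cdot)q$ for a unit vector $q$, which collapses to the \emph{scalar} martingale $\sum_i e_i\langle z, Y_{i-1}\rangle$ with $z=2q_1(q_1\theta^0+\tilde q)$, and applies Lemma 4.2 of \cite{simchowitz2018learning} directly; the companion event $\{\sum_i\|Y_{i-1}\|_2^2/(\sigma^2(N-n))\le\epsilon\tilde\beta\}$ is then handled via the trace bound $\sum_i\|Y_{i-1}\|_2^2\le(n+1)\sum_i y_i^2$ and the Laurent--Massart tail for the generalized $\chi^2$ variable $\sum_i y_i^2$, whose Toeplitz covariance eigenvalues are bounded by $\sigma M_\Phi$. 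Your first step is sound (indeed $\rho(vB^\top+Bv^\top)=|v_1|+\|v\|_2\le 2\|v\|_2$), and arguably handles the uniformity over directions more transparently than the pointwise-in-$q$ argument.

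However, there is a genuine gap. First, the step you yourself call the technical heart --- the high-probability upper bound on $\Lambda=\lambda_{\max}(\sum_i x_ix_i^\top)$ --- is only conjectured. A uniform Hanson--Wright bound over unit $u\in\mathbb{R}^{n+1}$ requires a covering/union argument that would inject extra dimension-dependent factors; it is also unnecessary, since $\Lambda\le\operatorname{tr}(\sum_i x_ix_i^\top)\le(n+1)\sum_i y_i^2$ reduces the problem to the scalar $\chi^2$-type tail the paper actually uses (Lemma 1 of \cite{laurent2000adaptive} applied to $\sum_i\lambda_i(\tilde v_i^2-1)$ with $t=\epsilon\sqrt N$), which is where the $N^{-1/4}$ term in $\tilde\beta$ and the $\exp(-\epsilon\sqrt N)$ tail come from. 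Second, even after filling that step, your route does not yield the stated constants: passing through $\|v\|_2^2\le\lambda_{\max}(\overline V_N)\|v\|_{\overline V_N^{-1}}^2$ with $\lambda$ chosen comparable to the high-probability bound on $\lambda_{\max}(A\Sigma A^\top)$ costs a factor of $2$ in the exponent (yielding $144$ rather than $72$ in the denominator) and a prefactor $2^{(n+1)/2}$ from $\det(\overline V_N)^{1/2}\det(V)^{-1/2}$, which cannot be absorbed into the fixed constant $2$ of the lemma. So the proposal, once completed, proves a bound of the same form but strictly weaker than the one claimed; to get the lemma verbatim you need the scalar route through Lemma 4.2 of \cite{simchowitz2018learning}, where the self-normalization is conditioned on the event $\{\sum_t Z_t^2\le\beta\}$ rather than paid for via $\lambda_{\max}(\overline V_N)$ and the $\log\det$ ratio.
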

In the main text, we provide proof of Lemma \ref{lemma4} only, which can be found in Section \ref{seclemma}. With these three lemmas, and by the subadditivity of the spectral radius of Hermitian matrices \cite[Fact 5.12.2]{bernstein2009matrix}, we have
	\begin{align}
	\mathcal{E}_1 \cap \mathcal{E}_2 \cap \mathcal{E}_3 \implies \rho(E_N - \sigma^2 BB^\top)\leq \sigma^2\epsilon \implies \sigma^2 (BB^\top-\epsilon I) \preceq E_N \preceq \sigma^2 (BB^\top+\epsilon I), \notag
	\end{align}
	which occurs with probability not less than $1-\delta(\epsilon,N)$. This also implies \eqref{probthm5}.
\end{proof}

Theorem \ref{theorem5} delivers a finite-sample bound on the sample covariance matrix. Naturally, this matrix will deviate from its expected value by a small amount for large sample sizes. Note that this bound depends on a fixed value $\epsilon$, which can be chosen arbitrarily small. As most self-normalized process bounds, $\delta(\epsilon,N)$ does not depend on the variance of the process noise. 

With this result, we are ready to state the desired deviation bound in Theorem \ref{theorem6}.

\begin{theorem}
	\label{theorem6}
    Consider the AR($n$) process described in \eqref{arnprocess}, where $\theta^0$ is assumed to yield an asymptotically stationary process, $\{e_t\}$ is an i.i.d. Gaussian random process with variance $\sigma^2$, and $\{y_t\}$ is stationary. Then, 
	\begin{equation}
	\label{probarn}
	\mathbb{P}\left(|w^\top (\hat{\theta}_N-\theta^0)| > 2\sigma \| w^\top V_{\textnormal{dn}}^{-1/2} \|_2 \sqrt{\log\left( \frac{\det(V_{\textnormal{up}}V_{\textnormal{dn}}^{-1} + I_n)^{1/2}}{\delta(\epsilon,N)} \right)}\right)\leq 2\delta(\epsilon,N),
	\end{equation}	
	where $V_{\textnormal{dn}}$, $V_{\textnormal{up}}$ and $\delta(\epsilon,N)$ are as described in Theorem \ref{theorem5}.
\end{theorem}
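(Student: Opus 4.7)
The strategy is to apply Cauchy--Schwarz in the inner product induced by $(Y^\top Y)^{-1}$ to the identity $w^\top(\hat{\theta}_N - \theta^0) = w^\top (Y^\top Y)^{-1} Y^\top E$, obtaining
\begin{equation}
|w^\top(\hat{\theta}_N-\theta^0)| \leq \|w\|_{(Y^\top Y)^{-1}} \, \|Y^\top E\|_{(Y^\top Y)^{-1}}, \notag
\end{equation}
and then to bound each factor on the high-probability event $\mathcal{A} := \{V_{\textnormal{dn}} \preceq Y^\top Y \preceq V_{\textnormal{up}}\}$ supplied by Theorem \ref{theorem5}, finishing with a union bound.

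On $\mathcal{A}$ the first factor is controlled immediately: monotonicity of matrix inversion yields $(Y^\top Y)^{-1} \preceq V_{\textnormal{dn}}^{-1}$, hence $\|w\|_{(Y^\top Y)^{-1}} \leq \|w^\top V_{\textnormal{dn}}^{-1/2}\|_2$. For the second factor I would invoke Proposition \ref{proposition1} with $X_t = Y_{t-1}$, $\eta_t = e_t$, $R = \sigma$, and, crucially, the regularizer chosen as $V = V_{\textnormal{dn}}$. The natural filtration $\mathcal{F}_t = \sigma(e_1, \ldots, e_t)$ makes $Y_{t-1}$ predictable as required, and $\sigma$-sub-Gaussianity of $e_t$ is immediate from the Gaussian assumption. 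Applied to $S_N = Y^\top E$ and $\overline{V}_N = V_{\textnormal{dn}} + Y^\top Y$, the proposition delivers, with probability at least $1-\delta(\epsilon,N)$,
\begin{equation}
\|Y^\top E\|_{(V_{\textnormal{dn}}+Y^\top Y)^{-1}}^2 \leq 2\sigma^2 \log\!\left( \frac{\det(V_{\textnormal{dn}}+Y^\top Y)^{1/2}\det(V_{\textnormal{dn}})^{-1/2}}{\delta(\epsilon,N)}\right). \notag
\end{equation}

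Two elementary manipulations on $\mathcal{A}$ then reshape this into the form appearing in Theorem \ref{theorem6}. First, $V_{\textnormal{dn}} \preceq Y^\top Y$ gives $V_{\textnormal{dn}}+Y^\top Y \preceq 2Y^\top Y$, so $(Y^\top Y)^{-1} \preceq 2(V_{\textnormal{dn}}+Y^\top Y)^{-1}$, which contributes a factor of $2$ under the square root and hence the leading $2\sigma$ in the statement. Second, $Y^\top Y \preceq V_{\textnormal{up}}$ gives
\begin{equation}
\frac{\det(V_{\textnormal{dn}}+Y^\top Y)}{\det(V_{\textnormal{dn}})} \leq \frac{\det(V_{\textnormal{dn}}+V_{\textnormal{up}})}{\det(V_{\textnormal{dn}})} = \det(I_n + V_{\textnormal{up}} V_{\textnormal{dn}}^{-1}), \notag
\end{equation}
producing the determinantal factor inside the logarithm. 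Combining the two factor bounds and taking a union bound over $\mathcal{A}^c$ and the exceptional event of Proposition \ref{proposition1} yields the advertised probability $2\delta(\epsilon,N)$.

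I do not foresee a serious technical obstacle beyond the bookkeeping above. The only design choice requiring care is the selection of the regularizer $V = V_{\textnormal{dn}}$: Proposition \ref{proposition1} otherwise allows any positive definite matrix, and an incompatible choice would prevent the Theorem \ref{theorem5} inequalities from simultaneously converting the regularized weighted norm into one controlled by $(Y^\top Y)^{-1}$ and producing the compact ratio $\det(I_n + V_{\textnormal{up}} V_{\textnormal{dn}}^{-1})$.
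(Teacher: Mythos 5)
Your proposal is correct and follows essentially the same route as the paper's own proof: the identical Cauchy--Schwarz splitting, the same choice of regularizer $V = V_{\textnormal{dn}}$ in Proposition \ref{proposition1}, the same two positive-semidefinite comparisons on the event $\{V_{\textnormal{dn}} \preceq Y^\top Y \preceq V_{\textnormal{up}}\}$ to produce the factor $2\sigma$ and the ratio $\det(V_{\textnormal{up}}V_{\textnormal{dn}}^{-1}+I_n)$, and the same union bound yielding $2\delta(\epsilon,N)$. No gaps.
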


\begin{proof}
	We will follow the main ideas in \cite[Theorem 1]{sarkar2019near}. We start by writing an upper bound using the Cauchy-Schwartz inequality
	\begin{equation}
	|w^\top (\hat{\theta}_N-\theta^0)| = |w^\top (Y^\top Y)^{-1}Y^\top E| \leq \| w^\top(Y^\top Y)^{-1/2} \|_2 \|(Y^\top Y)^{-1/2}Y^\top E \|_2. \notag
	\end{equation}
	In Theorem \ref{theorem5} we have found deterministic matrices $V_{\textnormal{dn}}, V_{\textnormal{up}}$ and a scalar $\delta(\epsilon,N)$ such that, for the event $\mathcal{E}_{\textnormal{pm}} := \{V_{\textnormal{dn}} \preceq Y^\top Y \preceq V_{\textnormal{up}}\}$, we have $	\mathbb{P}(\mathcal{E}_\textnormal{pm})\geq 1-\delta(\epsilon,N).$ The next step is to bound the self-normalized norm. This can be done by first defining the event
	\begin{equation}
	\mathcal{E}_{\textnormal{sn}} := \left\{  \| Y^\top E \|_{(Y^\top Y + V_{\textnormal{dn}})^{-1}} \leq \sqrt{2\sigma^2 \log\left( \frac{\det(Y^\top Y + V_{\textnormal{dn}})^{1/2}\det(V_{\textnormal{dn}})^{-1/2}}{\delta(\epsilon,N)} \right)} \right\}. \notag 
	\end{equation}
	It follows from Proposition \ref{proposition1} that $\mathbb{P}(\mathcal{E}_{\textnormal{sn}}) \geq 1-\delta(\epsilon,N)$. Also, under $\mathcal{E}_{\textnormal{pm}}$ we have that $Y^\top Y + V_{\textnormal{dn}} \preceq 2Y^\top Y$, which implies $(Y^\top Y + V_{\textnormal{dn}})^{-1} \succeq \frac{1}{2}(Y^\top Y)^{-1}$. So, considering the set $\mathcal{E}_{\textnormal{pm}}  \cap \mathcal{E}_{\textnormal{sn}}$, we obtain
	\begin{equation}
	\mathcal{E}_{\textnormal{pm}}  \cap \mathcal{E}_{\textnormal{sn}} \implies \mathcal{E}_{\textnormal{pm}}  \cap \left\{ \|(Y^\top Y)^{-1/2} Y^\top E \|_2 \leq 2\sigma \sqrt{\log\left( \frac{\det(V_{\textnormal{up}}V_{\textnormal{dn}}^{-1} + I_n)^{1/2}}{\delta(\epsilon,N)} \right)}  \right\}. \notag 
	\end{equation}
	Furthermore, observe that $\mathbb{P}(\mathcal{E}_{\textnormal{pm}}  \cap \mathcal{E}_{\textnormal{sn}})\geq 1-2\delta(\epsilon,N)$. So, if $\mathcal{E}_{\textnormal{pm}}  \cap \mathcal{E}_{\textnormal{sn}}$ holds, then 
	\begin{equation}
	\label{eventthm6}
	|w^\top (Y^\top Y)^{-1}Y^\top E| \leq 2\sigma \| w^\top V_{\textnormal{dn}}^{-1/2} \|_2 \sqrt{\log\left( \frac{\det(V_{\textnormal{up}}V_{\textnormal{dn}}^{-1} + I_n)^{1/2}}{\delta(\epsilon,N)} \right)}, 
	\end{equation}
	which means that the probability of the event in \eqref{eventthm6} is at least $1-2\delta(\epsilon,N)$. By considering the complement event, we obtain the probability bound \eqref{probarn}.
\end{proof}

Theorem \ref{theorem6} provides a finite-sample confidence bound on the deviation of the weighted parameter vector $w^\top \hat{\theta}_N$ with respect to its asymptotic value $w^\top \theta^0$. This result delivers probability bounds on the deviation each parameter $\theta_i$ individually, as well as any linear combination of them. Note that $\epsilon$ can be considered a \textit{tightness variable}, as by setting $\epsilon$ small, more samples are required to guarantee a desired confidence level, but the probability bound will be tighter. 

To end this analysis, we derive the decay rate of our probability bound in Corollary \ref{corollaryrate}.

\begin{corollary}
	\label{corollaryrate}
If $\epsilon$ is picked as $\lambda_n-N^{-1/2}$, where $\lambda_n$ is the smallest eigenvalue of 
\begin{equation}
\left(\sigma^2 \begin{bmatrix}
I_n \\ 0
\end{bmatrix}^\top \sum_{i=0}^\infty A^i (A^\top)^i\begin{bmatrix}
I_n \\ 0
\end{bmatrix}\right)^{-1/2} 
\begin{bmatrix}
I_n \\ 0
\end{bmatrix}^\top \overline{V}\begin{bmatrix}
I_n \\ 0
\end{bmatrix}
\left(\sigma^2 \begin{bmatrix}
I_n \\ 0
\end{bmatrix}^\top \sum_{i=0}^\infty A^i (A^\top)^i\begin{bmatrix}
I_n \\ 0
\end{bmatrix}\right)^{-\top/2}, \notag
\end{equation}
then $\delta \sim C e^{-\lambda_n \sqrt{N}}$ for large $N$ and the deviation in \eqref{probarn} is asymptotically a constant in $N$. This shows that the rate of decay of the probability bound is at least exponential in $\sqrt{N}$.
\end{corollary}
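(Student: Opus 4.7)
The plan is to substitute $\epsilon = \lambda_n - N^{-1/2}$ into both ingredients of the deviation bound in Theorem \ref{theorem6}, namely $\delta(\epsilon,N)$ and the expression inside the probability in \eqref{probarn}, and to track the leading-order asymptotics of each. Since $\epsilon$ converges to the strictly positive constant $\lambda_n$, the analysis reduces to comparing decay and growth rates in $N$ alone.

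For $\delta(\epsilon,N)$: with $\epsilon$ bounded away from zero, the quantity $\tilde\beta$ of \eqref{tildebeta} remains bounded as $N\to\infty$ (the $N^{-1/4}$ summand vanishes and the prefactor $(n+1)N/(N-n)$ converges). Consequently each of the first three summands of \eqref{deltadef} decays exponentially in $N$ with a strictly positive rate constant. The fourth summand equals $\exp(-(\lambda_n-N^{-1/2})\sqrt{N}) = e\cdot e^{-\lambda_n\sqrt{N}}$, and its rate $\lambda_n\sqrt{N}$ is asymptotically smaller than the $\Theta(N)$ rates of the first three. Hence the fourth summand dominates, giving $\delta(\epsilon,N)\sim C\,e^{-\lambda_n\sqrt{N}}$ with $C=2e$.

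For the deviation, write $R := \bigl(\sigma^2[I_n\ 0]\sum_{i\ge 0}A^i(A^\top)^i[I_n\ 0]^\top\bigr)^{1/2}$ and $M := R^{-1}[I_n\ 0]\overline{V}[I_n\ 0]^\top R^{-1}$, so that $V_{\textnormal{dn}}=(N-n)R(M-\epsilon I)R$ and $V_{\textnormal{up}}=(N-n)R(M+\epsilon I)R$; by definition, $\lambda_n$ is the smallest eigenvalue of $M$. Since $\lambda_{\min}(M-\epsilon I)=\lambda_n-\epsilon=N^{-1/2}$, I have $\|w^\top V_{\textnormal{dn}}^{-1/2}\|_2^2 \le \|R^{-1}w\|_2^2 / [(N-n)(\lambda_n-\epsilon)] = O(N^{-1/2})$, so the prefactor is $O(N^{-1/4})$. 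Moreover, $V_{\textnormal{up}}V_{\textnormal{dn}}^{-1}$ is similar to $(M+\epsilon I)(M-\epsilon I)^{-1}$ via conjugation by $R$, and a short algebraic simplification yields $\det(V_{\textnormal{up}}V_{\textnormal{dn}}^{-1}+I_n)=2^n\det(M)/\det(M-\epsilon I)$, which grows at most polynomially in $N$. Combined with $\delta\sim Ce^{-\lambda_n\sqrt N}$, the logarithm inside the square root of \eqref{probarn} behaves as $\lambda_n\sqrt{N}(1+o(1))$, so its square root is $\sqrt{\lambda_n}\,N^{1/4}(1+o(1))$. Multiplying by the $O(N^{-1/4})$ prefactor shows the deviation threshold is asymptotically $O(1)$, as claimed.

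The main delicacy is that all three factors (the norm prefactor, the square root of the log, and $\delta$) individually blow up or vanish with $N$; the statement holds because the $N^{-1/4}$ shrinkage of $\|w^\top V_{\textnormal{dn}}^{-1/2}\|_2$ is exactly compensated by the $N^{1/4}$ blow-up of the square-root term, which itself arises from the near-singularity of $V_{\textnormal{dn}}$ at the boundary choice $\epsilon=\lambda_n-N^{-1/2}$. Once this cancellation is identified, the remainder is routine asymptotic accounting.
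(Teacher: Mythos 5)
Your argument is correct and follows essentially the same route as the paper's Appendix~\ref{appendixdecay}: substitute $\epsilon=\lambda_n-N^{-1/2}$, observe that the $\exp(-\epsilon\sqrt{N})$ term dominates $\delta(\epsilon,N)$ while the other three summands decay like $e^{-cN}$, and balance the $N^{-1/4}$ decay of $\|w^\top V_{\textnormal{dn}}^{-1/2}\|_2$ against the $N^{1/4}$ growth of the square-rooted logarithm coming from $-\log\delta\sim\lambda_n\sqrt{N}$. The only immaterial difference is that you bound the prefactor via the operator norm of $(M-\epsilon I)^{-1}$, which yields the deviation is $O(1)$, whereas the paper diagonalizes $W^{-1}\tilde{\overline{V}}W^{-\top}$ exactly to exhibit the limiting constant proportional to $\sqrt{\lambda_n\,w^\top V_2 w}$ and, as a byproduct, the directions $w$ with $w^\top V_2 w=0$ along which the bound shrinks faster.
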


\section{Proof of Lemma \ref{lemma4}}
\label{seclemma}
Here we present a sketch of the proof of Lemma \ref{lemma4}, in which we use a martingale concentration inequality from \cite{simchowitz2018learning} and exploit the Gaussianity of $\{e_t\}$ by applying a concentration inequality for $\chi^2$ random variables found in \cite{laurent2000adaptive}.
\begin{proof}\textbf{of Lemma \ref{lemma4}}
	For any vector $q:=\begin{bmatrix}q_1 & \tilde{q}^\top
	\end{bmatrix}^\top \in \mathbb{R}^{n+1}$ of unit 2-norm, we have
	\vspace{-0.2cm}
	\begin{equation}
	\label{dotprod}
	q^\top \left(\frac{1}{N-n}\sum_{i=n}^{N-1} e_i(A x_{i-1}B^\top+ B x_{i-1}^\top A^\top) \right)q=\frac{2q_1}{N-n}(q_1{\theta^0}^\top+\tilde{q}^\top)\sum_{i=n}^{N-1}e_i Y_{i-1}.
	\end{equation}
	Since \eqref{dotprod} is symmetric around zero, it is sufficient to bound its upper tail. Next, we denote the vector $z:=2q_1(q_1\theta^0 + \tilde{q})$. By using Lemma 4.2 of \cite{simchowitz2018learning}, with $Z_t = \frac{1}{\sqrt{N-n}}\langle z, Y_{t-1}\rangle$, $W_t = e_t$, and $\beta = \epsilon \tilde{\beta}(N-n)\sigma^2\max_{\|q\|_2=1}\|2q_1{\theta^0}^\top+\tilde{q}^\top\|_2^2$, we obtain the inequality
	\vspace{-0.2cm}
	\small{
		\begin{equation}
		\mathbb{P}\left[\left\{ \sum_{i=n}^{N-1} \frac{\langle z, Y_{i-1}\rangle e_i}{N-n} \geq \frac{\sigma^2\epsilon}{3} \right\} \cap \left\{\sum_{i=n}^{N-1} \frac{\|Y_{i-1}\|_2^2}{\sigma^2(N-n)} \leq \epsilon \tilde{\beta} \right\}  \right] \leq \exp\left(-\frac{(N-n)\epsilon}{72\tilde{\beta} \max\limits_{\|q\|_2=1}\|q_1(q_1\theta^0 + \tilde{q})\|_2^2}\right). \notag
		\end{equation}} 
	\normalsize 
	Using the well-known inequality $\mathbb{P}(A\cap B)\geq \mathbb{P}(A)-\mathbb{P}(B^c)$, and the fact that $\sum_{i=n}^{N-1} \|Y_{i-1}\|_2^2\leq (n+1)\sum_{i=-1}^{N-2} y_i^2$, we obtain
	\vspace{-0.2cm}
	\small{\begin{equation}
		\mathbb{P}\left(\sum_{i=n}^{N-1} \frac{\langle z, Y_{i-1}\rangle e_i}{N-n} \geq \frac{\sigma^2\epsilon}{3} \right) \leq \exp\left(-\frac{(N-n)\epsilon}{72 \max\limits_{\|q\|_2=1}\|q_1(q_1\theta^0 + \tilde{q})\|_2^2 \tilde{\beta}}\right) + \mathbb{P}\left(\frac{n+1}{\sigma^2(N-n)}\sum_{i=-1}^{N-2} y_i^2 > \epsilon \tilde{\beta}\right). \notag
		\end{equation}}
	\normalsize 
	\hspace{-0.1cm}To tackle the last probability, we note that $\begin{bmatrix}
	y_{-1} & \cdots & y_{N-2}
	\end{bmatrix}^\top \sim \mathcal{N}(0,R_{N})$, where $R_{N}$ is a symmetric Toeplitz covariance matrix of eigenvalues $\{\lambda_i\}_{i=1}^N$. Hence, $Z = \sum_{i=-1}^{N-2}y_i^2$ is a generalized $\chi^2$ random variable, whose distribution is equal to the distribution of $v^\top R_N v$, where $v\sim \mathcal{N}(0,I_N)$. By the singular value decomposition $R_N = U_ND_NU_N^\top$ where $D_N = \textnormal{diag}\{\lambda_i\}$ and $U_N$ is a unitary matrix, and the rotation invariance of $v$ \cite[Chap. 3]{vershynin2018high}, we see that
	\vspace{-0.2cm}
	\begin{equation}
	\mathbb{P}\left(\sum_{i=-1}^{N-2} y_i^2 - \mathbb{E}\left\{\sum_{i=-1}^{N-2} y_i^2\right\} >t  \right) = \mathbb{P}\left(\sum_{i=1}^{N} \lambda_i (\tilde{v}_i^2-1) > t \right), \notag 
	\end{equation}
	where $\tilde{v}\sim \mathcal{N}(0,I_N)$. Then, by \cite[Lemma 1]{laurent2000adaptive},
	\vspace{-0.2cm}
	\begin{equation}
	\label{fromlaurent}
	\mathbb{P}\left(\sum_{i=-1}^{N-2} y_i^2 - \mathbb{E}\left\{\sum_{i=-1}^{N-2} y_i^2\right\} >2\|\lambda\|_2\sqrt{t} + 2 \|\lambda\|_\infty t  \right) \leq \exp(-t). 
	\end{equation}
	It is known (see, e.g. \cite[Section 4.2]{gray2006toeplitz}) that the maximum eigenvalue of $R_N$ is bounded by $\sigma M_\Phi$, where $M_\Phi$ is defined as in \eqref{MPhi}. By letting $t=\epsilon \sqrt{N}$ in \eqref{fromlaurent}, and upper bounding $\|\lambda\|_2$ and $\|\lambda\|_\infty$ by $\sqrt{N}M_\Phi$ and $M_\Phi$ respectively, we deduce that
	\vspace{-0.1cm}
	\begin{equation}
	\mathbb{P}\left(\frac{n+1}{\sigma^2(N-n)} \sum_{i=-1}^{N-2} y_i^2 > \underbrace{\frac{(n+1)N}{N-n}\left[\frac{\mathbb{E}\{y_1^2\}}{\sigma^2}+\frac{2M_\Phi(\epsilon+ \sqrt{\epsilon})}{N^{1/4}}\right]}_{\epsilon\tilde{\beta}}  \right)\leq \exp(-\epsilon \sqrt{N}). \notag 
	\end{equation}
	Finally, note that $\max_{\|q\|_2=1}\|q_1(q_1\theta^0 + \tilde{q})\|_2^2 \leq (\|\theta^0\|_2+1)^2$. With this, and considering the complement event, we reach the bound in Lemma \ref{lemma4}.
\end{proof}
	\vspace{-0.1cm}

\section{Discussion and conclusions}
\label{sec:discussion}
In this paper, we have provided finite-sample guarantees for the least squares estimates of the coefficients of general AR($n$) processes. For this, a concentration bound for the sample covariance matrix was derived. In this bound, the Gramian matrix $\sum_{i=0}^\infty A^i (A^\top)^i$ in $V_{\textnormal{dn}}$ and $V_{\textnormal{up}}$ shows that faster processes need less samples to guarantee concentration of the covariance matrix, which is a natural result. Regarding Theorem \ref{theorem6}, we find that the fixed vector $w$ impacts the confidence bound through the inverse of $V_{\textnormal{dn}}$, which resembles the results obtained in \cite[Eq. 20.2]{lattimore2018bandit} for least squares estimates of linear bandit algorithms with deterministic actions. The $\log\det$ term is also unsurprising, as it also appears in finite-sample analysis of LTI systems (see, e.g. \cite[Eq. 12]{sarkar2019near}). The deterministic matrices $V_{\textnormal{dn}}$ and $V_{\textnormal{up}}$ in \eqref{probarn} capture the correct behavior of the confidence bound, since it is large when the uncertainty on the sample covariance matrix is also large. Also, note that the proof of Theorem \ref{theorem6} heavily relies on bounding the probability of the normal matrix $Y^\top Y$, but it is easily decoupled from Theorem \ref{theorem5}. That is, if tighter bounds for $\mathcal{E}_{\textnormal{pm}}$ can be found, then Theorem \ref{theorem6} can be directly improved. Future work concerns proving finite-time variance bounds for the estimated parameters, extending the analysis for ARX models under sub-Gaussian noise, and deriving sharp lower bounds for AR($n$) processes.

\subsection*{Acknowledgments}
This work was supported by the Swedish Research Council under contract number 2016-06079 (NewLEADS).
\bibliography{References}

\newpage
\appendix 
\section{Proofs of Lemmas \ref{lemma2} and \ref{lemma3}}

\textbf{Proof of Lemma \ref{lemma2}.} For simplicity we consider $\tilde{\epsilon}:=\sigma^2 \epsilon$. We first see that
\begin{align}
\mathbb{P}&\left(\rho\left[\frac{1}{N-n}A(x_{n-1}x_{n-1}^\top - x_{N-1}x_{N-1}^\top)A^\top\right] >\frac{\tilde{\epsilon}}{3}\right)  \notag \\
&\hspace{1cm}\leq \mathbb{P}\left(\rho\left[\frac{1}{N-n}A(x_{n-1}x_{n-1}^\top + x_{N-1}x_{N-1}^\top)A^\top\right] >\frac{\tilde{\epsilon}}{3} \right) \notag \\
&\hspace{1cm}\leq \mathbb{P}\left( \frac{\|Ax_{n-1} \|_2^2}{N-n} + \frac{\|Ax_{N-1} \|_2^2}{N-n}>\frac{\tilde{\epsilon}}{3}  \right) \notag \\
&\hspace{1cm}\leq \mathbb{P}\left( \frac{\|x_{n-1} \|_2^2}{N-n} + \frac{\|x_{N-1} \|_2^2}{N-n}>\frac{\tilde{\epsilon}}{3}  \right) \notag \\
&\hspace{1cm}\leq \mathbb{P}\left( \frac{\| x_{n-1}\|_2^2}{N-n}>\frac{\tilde{\epsilon}}{6} \cup  \frac{\|x_{N-1}\|_2^2}{N-n}>\frac{\tilde{\epsilon}}{6}  \right).\notag 
\end{align}
By stationarity, the probabilities of the events 
\begin{equation}
\mathcal{E}_n = \left\{ \frac{\| x_{n-1}\|_2^2}{N-n}>\frac{\tilde{\epsilon}}{6}\right\}, \quad \mathcal{E}_N = \left\{ \frac{\| x_{N-1}\|_2^2}{N-n}>\frac{\tilde{\epsilon}}{6}\right\}, \notag
\end{equation}
are equal. For some fixed $s>0$, we can bound $\mathbb{P}(\mathcal{E}_n)$ by Chernoff's inequality \cite[Chap. 2]{wainwright2019high} and later compute the moment generating function of the generated $\chi^2$ distribution:
\begin{equation}
\mathbb{P}\left(\mathcal{E}_n\right)\leq \exp\left(-s \frac{\tilde{\epsilon}}{6}\right) \mathbb{E}\left[ \exp\left( \frac{s}{N-n} \sum_{t=-1}^{n-1} y_t^2 \right)\right] = \exp\left(-s \frac{\tilde{\epsilon}}{6}\right) \frac{1}{\prod_{i=1}^{n}\sqrt{1-\frac{2s}{N-n}\lambda_i(R_n)}}, \notag 
\end{equation}
where $R_n \in \mathcal{S}^n$ is the Toeplitz covariance matrix of $[y_{-1} \hspace{0.15cm} \cdots \hspace{0.15cm} y_{n-1}]$, $\lambda_i(R_n)$ is the $i$-th eigenvalue of the $R_n$, and $0<s<(N-n)/(2\lambda_{\textnormal{max}}(R_n))$. By using the Weierstrass product inequality (stated as Proposition \ref{weierstrassprodinequality} in the Auxiliary Results section), we obtain
\begin{equation}
\mathbb{P}\left(\mathcal{E}_n \right) \leq \exp\left(-s \frac{\tilde{\epsilon}}{6}\right) \left(1-\frac{2sn\mathbb{E}\{y_1^2\}}{N-n}\right)^{-1/2}. \notag 
\end{equation}
By noting that
\begin{equation}
\frac{N-n}{2\lambda_{\textnormal{max}}(R_n)}> \frac{N-n}{2\textnormal{tr}(R_n)}>\frac{N-n}{4n\mathbb{E}\{y_1^2\}}>0, \notag 
\end{equation}
we set $s=(N-n)/(4n\mathbb{E}\{y_1^2\})$ to derive the exponential inequality
\begin{equation}
\mathbb{P}\left(\frac{\| x_{n-1}\|_2^2}{N-n}>\frac{\tilde{\epsilon}}{6}\right) \leq \sqrt{2}\exp\left(\frac{-(N-n) \tilde{\epsilon}}{24n\mathbb{E}\{y_1^2\}}\right). \notag 
\end{equation}
So, by the chain of inequalities above, we conclude that
\begin{equation}
\mathbb{P}\left(\rho\left[\frac{1}{\sigma^2(N-n)}A(x_{n-1}x_{n-1}^\top - x_{N-1}x_{N-1}^\top)A^\top\right] >\frac{\epsilon}{3}\right)\leq 2\sqrt{2}\exp\left(\frac{-(N-n)\sigma^2 \epsilon }{24n\mathbb{E}\{y_1^2\}}\right), \notag 
\end{equation}
which implies the statement we wanted to prove. \hfill $\blacksquare$\\

\hspace{-0.6cm}\textbf{Proof of Lemma \ref{lemma3}.}
Event $\mathcal{E}_2$ is equivalent to the event
\begin{equation}
\left\{ \left|\frac{1}{N-n} \sum_{i=n-1}^{N-2} e_{i+1}^2 - \sigma^2\right| \leq \frac{\sigma^2\epsilon}{3} \right\}. \notag 
\end{equation}
To bound the probability of this event, we shall use a corollary of Lemma 1 of \cite{laurent2000adaptive} (Corollary \ref{laurentcorollary} in the Auxiliary Results section), which gives high probability bounds on the tails of a $\chi^2$ statistic. Via this result and unnormalizing the $\chi^2$ statistic, we obtain that
\begin{equation}
\label{unnormal1}
\mathbb{P}\left(\frac{1}{N-n}\sum_{i=n}^{N-1}e_i^2- \sigma^2 \geq 2\sigma^2\sqrt{\frac{x}{N-n}}+2\sigma^2\frac{x}{N-n}\right)\leq e^{-x}
\end{equation}
and
\begin{equation}
\label{unnormal2}
\mathbb{P}\left( \frac{1}{N-n}\sum_{i=n}^{N-1}e_i^2- \sigma^2 \leq -2\sigma^2\sqrt{\frac{x}{N-n}}\right)\leq e^{-x}. 
\end{equation}
Equations \eqref{unnormal1} and \eqref{unnormal2} imply that
\begin{equation}
\label{eqlemma2}
\mathbb{P}\left(-2\sigma^2\sqrt{\frac{x}{N-n}}\leq \frac{1}{N-n}\sum_{i=n}^{N-1}e_i^2- \sigma^2 \leq 2\sigma^2\sqrt{\frac{x}{N-n}}+2\sigma^2\frac{x}{N-n}\right)\geq 1-2e^{-x}. 
\end{equation}
Next, we are interested in solving the following quadratic equation for positive $x$:
\begin{align}
2\sigma^2 \sqrt{\frac{x}{N-n}}+2\sigma^2 \frac{x}{N-n} = \frac{\sigma^2\epsilon}{3} \implies& \sqrt{\frac{x}{N-n}} = \frac{-1+\sqrt{1+\frac{2\epsilon}{3}}}{2} \notag \\
\implies& x  = \frac{N-n}{2}\left(1+\frac{\epsilon}{3}-\sqrt{1+\frac{2\epsilon}{3}}\right). \notag 
\end{align}
By plugging this value of $x$ in \eqref{eqlemma2}, we obtain the result. \hfill $\blacksquare$

\section{On the decay rate of the bound}
\label{appendixdecay}
Here we analyze the rate of decay of the deviation bound given in Theorem \ref{theorem6}, which is stated explicitly in Corollary \ref{corollaryrate}. In particular, we study how fast $\delta$ decays to zero if the confidence bound is held constant. 

Consider \eqref{probarn}, and denote $\delta':= 2\delta$. Also, denote $\tilde{V}_{\textnormal{dn}}$ as $\frac{1}{N-n} V_{\textnormal{dn}}$, and
\begin{equation}
\varepsilon := 2\sqrt{\log(2)}\sigma \|w^\top V_{\textnormal{dn}}^{-1/2}\|_2 \sqrt{\log\left(\frac{\det(V_{\textnormal{up}}V_{\textnormal{dn}}^{-1}+I_n)^{1/2}}{\delta'}\right)}. \notag 
\end{equation}


First, we analyze $\tilde{V}_{\textnormal{dn}}$. We write $\tilde{V}_{\textnormal{dn}}$ as $\tilde{\overline{V}}-\epsilon \bar{\Gamma}$, where $\tilde{\overline{V}}:=[I_n \hspace{0.2cm} 0] \overline{V} [I_n \hspace{0.2cm} 0]^\top$ and $\bar{\Gamma} := \sigma^2 [I_n \hspace{0.2cm} 0] \sum_{i=0}^{\infty} A^i (A^\top)^i [I_n \hspace{0.2cm} 0]^\top$. Next, let $WW^\top$ be the Cholesky factorization of $\bar{\Gamma}$, and let $UDU^\top$ be the eigenvalue decomposition of $W^{-1}\tilde{\overline{V}} W^{-\top}$, where $D = \textnormal{diag}\{\lambda_1,\dots,\lambda_n\}$, and $\lambda_1\geq \dots \geq \lambda_n>0$. Thus,
\begin{equation}
\tilde{V}_{\textnormal{dn}}^{-1} = W^{-\top}U\begin{bmatrix}
\frac{1}{\lambda_1-\epsilon} & & 0 \\
& \ddots & \\
0& & \frac{1}{\lambda_n-\epsilon} 
\end{bmatrix}U^\top W^{-1}. \notag 
\end{equation}
Next, we choose $\epsilon$ such that $\epsilon = \lambda_n-N^{-1/2}$. This choice leads to 
\begin{equation}
\label{vdn11}
\|w^\top \tilde{V}_{\textnormal{dn}}^{-1/2}\|_2 = \sqrt{w^\top V_1 w + N^{1/2} w^\top V_2 w},
\end{equation}
where $V_1$ and $V_2$ are suitable matrices independent of $N$. In particular,
\begin{equation}
V_2 = W^{-\top}U\begin{bmatrix}
0_{n-k} & 0\\
0 & I_k  
\end{bmatrix}U^\top W^{-1}, \notag 
\end{equation}
with $k$ being the algebraic multiplicity of $\lambda_n$. Also, note that
\begin{equation}
\label{vdn12}
\det(V_{\textnormal{dn}}^{-1}) = \frac{N^{k/2}}{\det(\bar{\Gamma}) \prod_{i=1}^{n-k}(\lambda_i-\epsilon)}.
\end{equation}
Regarding $\delta'$, with the choice of $\epsilon = \lambda_n-N^{-1/2}$ we find that $\delta' \sim C e^{-\lambda_n \sqrt{N}}$ for large $N$. Hence, \eqref{vdn11} and \eqref{vdn12} lead to the following computations:
\begin{align}
	\varepsilon &= 2\sqrt{\log(2)} \sigma \sqrt{\frac{w^\top \tilde{V}_{\textnormal{dn}}^{-1} w}{(N-n)}} \sqrt{\log\left(\frac{\det(V_{\textnormal{up}}+V_{\textnormal{dn}})^{1/2}}{\det(V_{\textnormal{dn}})^{1/2}\delta'}\right)} \notag \\
	&= 2\sqrt{\log(2)} \sigma \sqrt{\frac{w^\top V_1 w + N^{1/2} w^\top V_2 w}{(N-n)}} \sqrt{\log\left(\frac{\det\left(\begin{bmatrix} I_n & 0 \end{bmatrix} \overline{V} \begin{bmatrix} I_n \\ 0 \end{bmatrix}     \right)^{1/2}}{\det(\tilde{V}_{\textnormal{dn}})^{1/2}\delta'}\right)} \notag \\
	&\sim 2\sqrt{\log(2)} \sigma N^{-1/4} \sqrt{w^\top V_2 w} \sqrt{\frac{1}{2}\log \left(\frac{\det\left(\begin{bmatrix} I_n & 0 \end{bmatrix} \overline{V} \begin{bmatrix} I_n \\ 0 \end{bmatrix}\right)}{C^2 \det(\bar{\Gamma}) \prod_{i=1}^{n-k}(\lambda_i-\epsilon)} \right)+\frac{k}{4}\log(N)+\lambda_n \sqrt{N}} \notag \\
	&= \mathcal{O}(1) \textnormal{ as }N\to \infty. \notag 
\end{align}
Hence, the rate of decay is at least exponential in $\sqrt{N}$. That is, for an asymptotically constant confidence bound, the probability of interest decays as $e^{-\lambda_n \sqrt{N}}$. Note that a byproduct of this derivation are the directions of $w$ in which a faster learning can be achieved: Since $V_2$ has rank $k$, a subspace of dimension $n-k$ of directions satisfies $w^\top V_2 w=0$, for which the dominant term in $\|w^\top V_{\textnormal{dn}}^{-1/2}\|_2$ becomes $N^{-1/2}$ instead of $N^{-1/4}$.   

\section{Auxiliary results}
\begin{proposition}[Lemma 4.2 of \cite{simchowitz2018learning}]
Let $\{\mathcal{F}_t\}_{t\geq 0}$ be a filtration, and $\{Z_t\}_{t\geq 1}$ and $\{W_t\}_{t\geq 1}$ be real-valued processes adapted to $\mathcal{F}_t$ and $\mathcal{F}_{t+1}$ respectively. Moreover, assume $W_t|\mathcal{F}_t$ is mean zero and $\sigma^2$-sub-Gaussian. Then, for any positive real numbers $\alpha,\beta$ we have
\begin{equation}
\mathbb{P}\left[\left\{ \sum_{t=1}^T Z_t W_t \geq \alpha \right\} \cap \left\{\sum_{t=1}^T Z_t^2 \leq \beta \right\}  \right] \leq \exp\left(-\frac{\alpha^2}{2\sigma^2 \beta}\right). \notag
\end{equation}
\end{proposition}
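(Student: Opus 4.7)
The plan is to establish this via a Chernoff-style argument built on an exponential supermartingale, which is the standard device for handling self-normalized sums of martingale differences. The key observation is that although the event restricts the quadratic variation $\sum Z_t^2$, we only need a one-sided bound, so the intersection structure fits perfectly with the exponential method.

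First I would fix an arbitrary $\lambda>0$ and use the sub-Gaussianity hypothesis together with the $\mathcal{F}_t$-measurability of $Z_t$ to obtain the conditional moment generating function bound
\begin{equation}
\mathbb{E}\bigl[\exp(\lambda Z_t W_t)\,\big|\,\mathcal{F}_t\bigr] \;\leq\; \exp\!\left(\frac{\lambda^2 \sigma^2 Z_t^2}{2}\right), \notag
\end{equation}
since conditional on $\mathcal{F}_t$ the random variable $\lambda Z_t$ behaves as a constant multiplier and the sub-Gaussianity of $W_t|\mathcal{F}_t$ scales accordingly. From this, I would define the process $M_t := \exp\!\bigl(\lambda S_t - \tfrac{\lambda^2\sigma^2}{2} V_t\bigr)$ with $S_t := \sum_{s=1}^t Z_s W_s$ and $V_t := \sum_{s=1}^t Z_s^2$, and verify that $\{M_t\}$ is a non-negative supermartingale with respect to $\{\mathcal{F}_{t+1}\}$ satisfying $\mathbb{E}[M_t]\leq \mathbb{E}[M_0]=1$; this is just a one-line conditional expectation computation using the MGF bound above.

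Next I would note that on the event $\mathcal{A} := \{S_T \geq \alpha\}\cap\{V_T\leq\beta\}$ one has, deterministically and simultaneously, $\lambda S_T \geq \lambda\alpha$ and $-\tfrac{\lambda^2\sigma^2}{2}V_T \geq -\tfrac{\lambda^2\sigma^2}{2}\beta$, so that
\begin{equation}
M_T \;\geq\; \exp\!\left(\lambda \alpha - \frac{\lambda^2 \sigma^2}{2}\beta\right) \quad \text{on } \mathcal{A}. \notag
\end{equation}
Applying Markov's inequality to $M_T$ then yields
\begin{equation}
\mathbb{P}(\mathcal{A}) \;\leq\; \mathbb{E}[M_T]\cdot \exp\!\left(-\lambda \alpha + \frac{\lambda^2 \sigma^2}{2}\beta\right) \;\leq\; \exp\!\left(-\lambda \alpha + \frac{\lambda^2 \sigma^2}{2}\beta\right). \notag
\end{equation}

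Finally I would optimize the free parameter by choosing $\lambda^\star = \alpha/(\sigma^2 \beta)>0$, which is admissible since $\alpha,\beta>0$, giving the advertised bound $\exp(-\alpha^2/(2\sigma^2\beta))$. The only subtlety worth flagging is the indexing convention: because $W_t$ is $\mathcal{F}_{t+1}$-measurable while $Z_t$ is $\mathcal{F}_t$-measurable, $Z_tW_t$ is a martingale difference with respect to $\{\mathcal{F}_{t+1}\}$, and the supermartingale property must be verified with that filtration. No uniform-in-$\lambda$ argument or stopping-time construction is needed here, which keeps the proof clean; the whole difficulty, minimal as it is, lies in correctly setting up the exponential process so that both the martingale property and the deterministic lower bound on $\mathcal{A}$ are available simultaneously.
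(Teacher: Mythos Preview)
Your argument is correct: the exponential supermartingale $M_t=\exp\bigl(\lambda S_t-\tfrac{\lambda^2\sigma^2}{2}V_t\bigr)$, the deterministic lower bound on the intersection event, Markov's inequality, and the optimization over $\lambda$ all go through exactly as you describe, and your remark about the filtration shift (working with $\{\mathcal{F}_{t+1}\}$) is the right bookkeeping detail.

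As for comparison with the paper: there is nothing to compare against. The paper does not supply its own proof of this proposition; it is listed in the Auxiliary Results appendix purely as a citation of Lemma~4.2 of \cite{simchowitz2018learning}, with the statement reproduced and no argument given. Your Chernoff/supermartingale derivation is in fact the standard proof of this lemma in the original reference, so you have essentially reconstructed the source argument.
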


\begin{proposition}[Lemma 1 of \cite{laurent2000adaptive}]
	\label{laurentproposition}
Let $(Y_1,\dots,Y_D)$ be i.i.d. Gaussian variables, with mean $0$ and variance $1$. Let $a_1,\dots,a_D$ be nonnegative. We set
\begin{equation}
\|a\|_{\infty} = \sup_{i=1,\dots,D} |a_i|, \quad \|a\|_2^2 = \sum_{i=1}^{D}a_i^2. \notag 
\end{equation}
Let 
\begin{equation}
Z = \sum_{i=1}^D a_i (Y_i^2-1). \notag 
\end{equation}
Then, the following inequalities hold for any positive $x$:
\begin{align}
\mathbb{P}(Z\geq 2\|a\|_2\sqrt{x}+2\|a\|_{\infty}x)&\leq \exp(-x) \notag \\
\mathbb{P}(Z\leq -2\|a\|_2\sqrt{x})&\leq \exp(-x). \notag
\end{align}
\end{proposition}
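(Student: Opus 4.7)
My plan is to treat the two tails separately by the Cram\'er--Chernoff method applied to the exact moment generating function of the centered scaled $\chi^2_1$ summands. Computing directly from the $\mathcal{N}(0,1)$ density, $\mathbb{E}[e^{\lambda a_i(Y_i^2-1)}] = e^{-\lambda a_i}(1-2\lambda a_i)^{-1/2}$ for $\lambda<1/(2a_i)$, and by independence
\[
\log\mathbb{E}[e^{\lambda Z}] = \sum_{i=1}^D\Bigl(-\lambda a_i - \tfrac{1}{2}\log(1-2\lambda a_i)\Bigr) = \sum_{i=1}^D\sum_{k\geq 2}\frac{(2\lambda a_i)^k}{2k}
\]
for $0<\lambda<1/(2\|a\|_\infty)$. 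I will keep this exact log-MGF rather than pass to a sub-exponential Bernstein-type surrogate, because the surrogate does not recover the constant $e^{-x}$ in the stated bound.

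For the upper tail I would first dominate coordinate-wise using $a_i^k \leq a_i^2\|a\|_\infty^{k-2}$ (valid since $a_i\geq 0$ and $k\geq 2$), which reassembles the double sum into a one-parameter bound
\[
\log \mathbb{E}[e^{\lambda Z}] \leq \frac{\|a\|_2^2}{\|a\|_\infty^2}\Bigl(-\tfrac{1}{2}\log(1-2\lambda\|a\|_\infty)-\lambda\|a\|_\infty\Bigr).
\]
This reduces the Chernoff exponent to essentially the homogeneous $\chi^2$ case with effective dimension $D^\star = \|a\|_2^2/\|a\|_\infty^2$ and scale $\|a\|_\infty$. A routine first-order optimization in $\lambda$ yields the closed-form tail
$\mathbb{P}(Z\geq t) \leq \exp\!\bigl(-t/(2\|a\|_\infty)+(D^\star/2)\log(1+t/(\|a\|_\infty D^\star))\bigr)$.
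Plugging in $t=2\|a\|_2\sqrt{x}+2\|a\|_\infty x$ and setting $u:=\|a\|_\infty\sqrt{x}/\|a\|_2$, the required inequality collapses to $\log(1+2u+2u^2)\leq 2u$, which is an immediate consequence of truncating the Taylor series of $e^{2u}$.

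The lower tail is easier. Replacing $\lambda$ with $-\lambda$ ($\lambda>0$) gives $\log \mathbb{E}[e^{-\lambda Z}] = \sum_i\bigl[\lambda a_i-\tfrac{1}{2}\log(1+2\lambda a_i)\bigr]$, and the elementary inequality $\log(1+u)\geq u-u^2/2$ for $u\geq 0$ (whose derivative is $u^2/(1+u)\geq 0$, so monotone from $0$) yields $\lambda a_i-\tfrac{1}{2}\log(1+2\lambda a_i)\leq \lambda^2 a_i^2$ term by term. Summing, $-Z$ is sub-Gaussian with variance proxy $2\|a\|_2^2$, and the standard Chernoff optimization at $\lambda=t/(2\|a\|_2^2)$ delivers $\mathbb{P}(Z\leq -t)\leq \exp(-t^2/(4\|a\|_2^2))$. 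Setting $t=2\|a\|_2\sqrt{x}$ gives exactly the stated bound.

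The main obstacle I anticipate lies in the upper tail: one must resist collapsing $\sum_{k\geq 2}(2\lambda a_i)^k/(2k)$ into the familiar Bernstein surrogate $\lambda^2\|a\|_2^2/(1-2\lambda\|a\|_\infty)$. That simplification loses a multiplicative factor in the exponent and only yields $\mathbb{P}(Z\geq t)\leq e^{-cx}$ with $c<1$ at the prescribed $t$. Keeping the $\log(1-\cdot)$ intact through the coordinate-wise reduction $a_i^k\leq a_i^2\|a\|_\infty^{k-2}$, and only optimizing afterwards, is precisely what makes the constants work out to $e^{-x}$ on the nose via the one-line inequality $1+2u+2u^2\leq e^{2u}$.
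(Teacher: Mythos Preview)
The paper does not prove this proposition at all; it is quoted verbatim as an auxiliary result from \cite{laurent2000adaptive} and used as a black box, so there is no ``paper's own proof'' to compare against. Your Cram\'er--Chernoff argument is correct: the coordinate-wise reduction $a_i^k\le a_i^2\|a\|_\infty^{k-2}$ collapses the log-MGF to the homogeneous $\chi^2$ case with effective dimension $D^\star=\|a\|_2^2/\|a\|_\infty^2$, the optimized exponent $-t/(2\|a\|_\infty)+(D^\star/2)\log(1+t/(\|a\|_\infty D^\star))$ is computed correctly, and the substitution $t=2\|a\|_2\sqrt{x}+2\|a\|_\infty x$ reduces the claim to $1+2u+2u^2\le e^{2u}$. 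The lower tail via $\log(1+u)\ge u-u^2/2$ is standard and sharp.

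One inaccuracy worth flagging: your warning that the Bernstein surrogate $\log\mathbb{E}[e^{\lambda Z}]\le \lambda^2\|a\|_2^2/(1-2\lambda\|a\|_\infty)$ ``only yields $e^{-cx}$ with $c<1$'' is wrong. That surrogate is exactly what Laurent and Massart use, and with the choice $\lambda=\sqrt{x}/(\|a\|_2+2\|a\|_\infty\sqrt{x})$ one gets
\[
\lambda t-\frac{\lambda^2\|a\|_2^2}{1-2\lambda\|a\|_\infty}
=\frac{x\|a\|_2+2\|a\|_\infty x^{3/2}}{\|a\|_2+2\|a\|_\infty\sqrt{x}}=x
\]
on the nose at $t=2\|a\|_2\sqrt{x}+2\|a\|_\infty x$. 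So your route is valid but slightly more elaborate than the original; the simpler surrogate already achieves the stated constant.
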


\begin{corollary}[Corollary of Lemma 1 of \cite{laurent2000adaptive}]
\label{laurentcorollary}
Let $U$ be a $\chi^2$ statistic with $D$ degrees of freedom. For any positive $x$,
\begin{align}
\mathbb{P}(U-D\geq 2\sqrt{Dx}+2x)&\leq \exp(-x) \notag \\
\mathbb{P}(U-D\leq -2\sqrt{Dx})&\leq \exp(-x). \notag 
\end{align}
\end{corollary}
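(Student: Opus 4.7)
The plan is to derive this corollary as a direct specialization of Proposition \ref{laurentproposition} (Laurent--Massart's Lemma 1). Recall that a $\chi^2$ statistic $U$ with $D$ degrees of freedom can, by definition, be represented as $U = \sum_{i=1}^{D} Y_i^2$, where $Y_1,\dots,Y_D$ are i.i.d.\ $\mathcal{N}(0,1)$ variables. Hence
\begin{equation}
U - D = \sum_{i=1}^{D}(Y_i^2 - 1), \notag
\end{equation}
which is precisely the random variable $Z$ appearing in Proposition \ref{laurentproposition}, provided we choose the coefficients $a_1,\dots,a_D$ appropriately.

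The obvious choice is $a_i = 1$ for all $i$. With this selection, the norms required by Proposition \ref{laurentproposition} collapse to elementary quantities: $\|a\|_\infty = \max_i |a_i| = 1$ and $\|a\|_2 = \sqrt{\sum_i a_i^2} = \sqrt{D}$. Substituting these values into the two inequalities in the statement of Proposition \ref{laurentproposition} yields, for any positive $x$,
\begin{equation}
\mathbb{P}\bigl(U - D \geq 2\sqrt{D x} + 2x\bigr) \leq e^{-x}, \qquad \mathbb{P}\bigl(U - D \leq -2\sqrt{D x}\bigr) \leq e^{-x}, \notag
\end{equation}
which are exactly the two inequalities to be proved.

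There is essentially no obstacle here: the corollary is a bookkeeping reduction rather than a theorem in its own right. The only point worth verifying carefully is that the nonnegativity assumption on the coefficients in Proposition \ref{laurentproposition} is satisfied (it clearly is, as $a_i = 1 \geq 0$), and that the representation $U = \sum_i Y_i^2$ is the one adopted in the paper, so that the identification $Z = U - D$ is valid. Once those trivial checks are in place, the two-line substitution closes the argument.
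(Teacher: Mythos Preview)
Your proof is correct. The paper does not supply its own proof of this corollary; it merely records it as an auxiliary result from \cite{laurent2000adaptive}, so your specialization of Proposition~\ref{laurentproposition} with $a_i=1$ (giving $\|a\|_2=\sqrt{D}$, $\|a\|_\infty=1$, and $Z=U-D$) is exactly the intended one-line derivation.
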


\begin{proposition}\textnormal{\textbf{(Weierstrass Product Inequality \citep[p. 210]{mitrinovic1970analytic})}}
	\label{weierstrassprodinequality}
Given real numbers $0\leq \lambda_1,\lambda_2,\dots,\lambda_n\leq 1$, the following inequality holds:
\begin{equation}
\label{weierstrass}
(1-\lambda_1)(1-\lambda_2)\cdots (1-\lambda_n) \geq 1-\sum_{k=1}^n \lambda_k.
\end{equation}
\end{proposition}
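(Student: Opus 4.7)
The plan is to prove this by straightforward induction on $n$, which is the cleanest route given that the factors $(1-\lambda_k)$ are all nonnegative under the hypothesis $0\leq\lambda_k\leq 1$.

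For the base case $n=1$, the statement reduces to $1-\lambda_1 \geq 1-\lambda_1$, which holds with equality. For the inductive step, I would assume the inequality holds for $n-1$ factors, namely
\begin{equation}
\prod_{k=1}^{n-1}(1-\lambda_k) \geq 1-\sum_{k=1}^{n-1}\lambda_k, \notag
\end{equation}
and multiply both sides by the nonnegative scalar $(1-\lambda_n)$. Because $1-\lambda_n \geq 0$, the direction of the inequality is preserved (this is the one spot where the hypothesis $\lambda_k \leq 1$ is essential: otherwise, even if the left-hand side of the conclusion stays well-defined, multiplying by a negative quantity would reverse the sign). This yields
\begin{equation}
\prod_{k=1}^{n}(1-\lambda_k) \geq \Bigl(1-\sum_{k=1}^{n-1}\lambda_k\Bigr)(1-\lambda_n) = 1-\sum_{k=1}^{n}\lambda_k + \lambda_n\sum_{k=1}^{n-1}\lambda_k. \notag
\end{equation}
Since $\lambda_n \geq 0$ and each $\lambda_k\geq 0$, the cross term $\lambda_n \sum_{k=1}^{n-1}\lambda_k$ is nonnegative, so it can be dropped to obtain the desired bound $\prod_{k=1}^{n}(1-\lambda_k)\geq 1-\sum_{k=1}^{n}\lambda_k$, closing the induction.

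There is really no serious obstacle here; the only point that merits attention is verifying that the multiplication by $(1-\lambda_n)$ preserves the inequality regardless of whether the right-hand side $1-\sum_{k=1}^{n-1}\lambda_k$ is positive, zero, or negative (the inductive hypothesis does not require it to be nonnegative, and when it is negative the conclusion is a genuine strengthening rather than a vacuous bound). As an alternative I could expand $\prod_k(1-\lambda_k)$ via the elementary symmetric polynomials and attempt to dominate the alternating tail by zero, but the signs oscillate and the argument becomes more delicate, so the inductive proof is preferable for both brevity and transparency.
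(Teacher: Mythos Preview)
Your proof is correct and follows essentially the same approach as the paper's: both proceed by induction on the number of factors, multiply the inductive hypothesis by the next nonnegative factor $(1-\lambda_{n})$ (or $(1-\lambda_{n+1})$ in the paper's indexing), expand, and discard the nonnegative cross term. Your added remark that the multiplication step is valid even when $1-\sum_{k=1}^{n-1}\lambda_k$ is negative is a nice clarification the paper leaves implicit.
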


\begin{proof}\hspace{-0.07cm}\textbf{.}
	We proceed by induction. For $n=1$, the inequality is obvious. Next, assume that \eqref{weierstrass} holds for some $n$. Then,
	\begin{align}
	(1-\lambda_1)(1-\lambda_2)\cdots (1-\lambda_n)(1-\lambda_{n+1}) &\geq \left(1-\sum_{k=1}^n \lambda_k\right)(1-\lambda_{n+1}) \notag \\
	&= 1-\sum_{k=1}^n \lambda_k - \lambda_{n+1} + \lambda_{n+1}\sum_{k=1}^n \lambda_k \notag \\
	&\geq 1-\sum_{k=1}^{n+1} \lambda_k, \notag 
	\end{align}
	which shows that \eqref{weierstrass} is valid for $n+1$ as well. By induction, the statement follows.
\end{proof}

\begin{proposition} \textnormal{\textbf{(Subadditivity of the spectral radius of Hermitian matrices \cite[Fact 5.12.2]{bernstein2009matrix})}}
Let $A,B \in \mathbb{R}^{n\times n}$ be Hermitian matrices. Then,
\begin{equation}
\label{subadspectralradius}
\rho(A+B)\leq \rho(A)+\rho(B). 
\end{equation}
\end{proposition}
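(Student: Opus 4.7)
The plan is to exploit the fact that for a Hermitian matrix $H$, the spectral radius admits a variational characterization in terms of the Rayleigh quotient, namely
\begin{equation}
\rho(H) \;=\; \max_{\|x\|_2=1} |x^* H x|.
\end{equation}
This identity follows from the spectral theorem: a Hermitian $H$ has real eigenvalues $\lambda_1,\dots,\lambda_n$ and an orthonormal eigenbasis, so $x^*Hx = \sum_i \lambda_i |\alpha_i|^2$ for $x = \sum_i \alpha_i v_i$ with $\sum_i |\alpha_i|^2 = 1$, whose absolute value is maximized by placing all mass on an eigenvector of the eigenvalue of largest modulus. I would first state this characterization as the entry point to the argument.

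Next, I would observe that the sum $A+B$ of two Hermitian matrices is itself Hermitian, so the same characterization applies. For any unit vector $x$, the scalar triangle inequality gives
\begin{equation}
|x^*(A+B)x| \;\le\; |x^*Ax| + |x^*Bx|.
\end{equation}
Taking the supremum over unit $x$ on the left, and using the elementary bound $\sup_x (f(x)+g(x)) \le \sup_x f(x) + \sup_x g(x)$ on the right, yields
\begin{equation}
\rho(A+B) \;\le\; \max_{\|x\|_2=1}|x^*Ax| + \max_{\|x\|_2=1}|x^*Bx| \;=\; \rho(A) + \rho(B),
\end{equation}
which is the desired inequality.

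There is no real obstacle here; the only subtlety is to make sure we are using the Hermitian version of the Rayleigh-quotient identity (so that $\rho$ coincides with the spectral norm and has a clean variational form), because for general non-Hermitian matrices $\rho$ is not subadditive. An equivalent route, which I would mention as an alternative, is to note that for Hermitian matrices $\rho$ equals the operator 2-norm $\|\cdot\|_2$, so the inequality reduces to the triangle inequality for that norm. Either way, the proof is a one-line consequence of spectral theory and the triangle inequality, so I would keep the write-up short and emphasize the Hermiticity hypothesis as the crucial ingredient.
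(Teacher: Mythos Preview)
Your argument is correct. Both your proof and the paper's proof rest on the Rayleigh-quotient description of Hermitian spectra, but you organize it more compactly. The paper first establishes the Weyl-type inequalities $\lambda_{\min}(A)+\lambda_{\min}(B)\le\lambda_{\min}(A+B)$ and $\lambda_{\max}(A+B)\le\lambda_{\max}(A)+\lambda_{\max}(B)$ via separate $\min$/$\max$ Rayleigh quotients, and then combines them through the identity $\rho(H)=\max\{-\lambda_{\min}(H),\lambda_{\max}(H)\}$, handling two cases. You instead invoke the single identity $\rho(H)=\max_{\|x\|_2=1}|x^*Hx|$ (equivalently, that $\rho$ coincides with the operator $2$-norm for Hermitian matrices) and apply the scalar triangle inequality once, which folds both cases into one step. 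Your route is shorter and makes the role of Hermiticity more transparent; the paper's route has the side benefit of explicitly recording the eigenvalue bounds \eqref{chainineq}, which are sometimes useful on their own.
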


\begin{proof}\hspace{-0.07cm}\textbf{.}
	For this we will first prove the following chain of inequalities:
	\begin{equation}
	\label{chainineq}
	\lambda_{\textnormal{min}}(A)+\lambda_{\textnormal{min}}(B) \leq \lambda_{\textnormal{min}}(A+B) \leq \lambda_{\textnormal{max}}(A+B) \leq \lambda_{\textnormal{max}}(A)+\lambda_{\textnormal{max}}(B).
	\end{equation}
	By definition,
	\begin{equation}
	\lambda_{\textnormal{min}}(A)+\lambda_{\textnormal{min}}(B) = \min_{x\in \mathbb{R}^n\backslash\{0\}} \frac{x^\top A x}{x^\top x} + \min_{y\in \mathbb{R}^n\backslash\{0\}} \frac{y^\top B y}{y^\top y} \leq \min_{x\in \mathbb{R}^n\backslash\{0\}} \frac{x^\top (A+B) x}{x^\top x} = \lambda_{\textnormal{min}}(A+B). \notag
	\end{equation}
	Similarly,
	\begin{equation}
	\lambda_{\textnormal{max}}(A+B) = \max_{x\in \mathbb{R}^n\backslash\{0\}} \frac{x^\top (A+B) x}{x^\top x} \leq \max_{x\in \mathbb{R}^n\backslash\{0\}} \frac{x^\top A x}{x^\top x} + \max_{y\in \mathbb{R}^n\backslash\{0\}} \frac{y^\top B y}{y^\top y} =\lambda_{\textnormal{max}}(A)+\lambda_{\textnormal{max}}(B). \notag 
	\end{equation}
    With \eqref{chainineq} in mind, and using the fact that for any Hermitian matrix $H$ we can write $\rho(H)$ as $\max\{ -\lambda_{\textnormal{min}}(H),\lambda_{\max}(H)\}$ \citep[Fact 5.11.5]{bernstein2009matrix},  we see that
	\begin{equation}
	\label{comb1}
	\lambda_{\textnormal{max}}(A+B) \leq \max\{ -\lambda_{\textnormal{min}}(A),\lambda_{\max}(A)\} + \max\{ -\lambda_{\textnormal{min}}(B),\lambda_{\max}(B)\} = \rho(A)+\rho(B),
	\end{equation}
	and
	\begin{equation}
	\label{comb2}
	-\lambda_{\textnormal{min}}(A+B) \leq \max\{ -\lambda_{\textnormal{min}}(A),\lambda_{\max}(A)\} + \max\{ -\lambda_{\textnormal{min}}(B),\lambda_{\max}(B)\} = \rho(A)+\rho(B),
	\end{equation}
	Combining \eqref{comb1} and \eqref{comb2}, we reach
	\begin{equation}
	\rho(A+B) = \max\{ -\lambda_{\textnormal{min}}(A+B),\lambda_{\max}(A+B)\} \leq \rho(A)+\rho(B), \notag	
	\end{equation}
which is what we wanted to prove.
\end{proof}

\end{document}